\def\eqref#1{equation~\ref{#1}}
\def\Eqref#1{Equation~\ref{#1}}
\def\1{\bm{1}}
\def\rx{{\textnormal{x}}}
\def\ry{{\textnormal{y}}}
\def\rz{{\textnormal{z}}}
\def\vzero{{\bm{0}}}
\def\vone{{\bm{1}}}
\def\vtheta{{\bm{\theta}}}
\def\va{{\bm{a}}}
\def\vb{{\bm{b}}}
\def\vc{{\bm{c}}}
\def\ve{{\bm{e}}}
\def\vf{{\bm{f}}}
\def\vs{{\bm{s}}}
\def\vu{{\bm{u}}}
\def\vv{{\bm{v}}}
\def\vx{{\bm{x}}}
\def\vy{{\bm{y}}}
\def\mA{{\bm{A}}}
\def\mJ{{\bm{J}}}
\def\mW{{\bm{W}}}
\def\mSigma{{\bm{\Sigma}}}
\DeclareMathAlphabet{\mathsfit}{\encodingdefault}{\sfdefault}{m}{sl}
\SetMathAlphabet{\mathsfit}{bold}{\encodingdefault}{\sfdefault}{bx}{n}
\def\sA{{\mathbb{A}}}
\def\sB{{\mathbb{B}}}
\def\sF{{\mathbb{F}}}
\def\sP{{\mathbb{P}}}
\def\sV{{\mathbb{V}}}
\def\sW{{\mathbb{W}}}
\newcommand{\E}{\mathbb{E}}
\newcommand{\R}{\mathbb{R}}
\newcommand{\normltwo}{L^2}
\DeclareMathOperator*{\argmin}{arg\,min}
\newcommand{\loss}{\mathcal{L}}
\newcommand{\uniform}{\mathcal{U}}
\newcommand{\relu}{\texttt{ReLU}}
\newcommand{\bn}{\texttt{BN}}
\newcommand{\exponential}{\texttt{exponential}}
\newcommand{\bernoulli}{\texttt{Bernoulli}}
\newcommand{\normal}{\mathcal{N}}
\newcommand{\mmd}{\texttt{MMD}}
\newcommand{\kl}{\texttt{KL}}
\newcommand{\fcn}{\texttt{FCN}}
\newcommand{\diag}{\texttt{diag}}
\newcommand{\supp}{\texttt{supp}}
\newcommand{\subexp}{\texttt{SE}}
\newcommand{\vepsilon}{\boldsymbol{\epsilon}}
\newcommand{\vxi}{\boldsymbol{\xi}}
\newtheorem{theorem}{Theorem}
\newtheorem{claim}{Claim}
\newtheorem{definition}{Definition}
\newtheorem{lemma}{Lemma}
\newtheorem{proposition}{Proposition}
\newenvironment{talign}
 {\let\displaystyle\textstyle\align}
 {\endalign}
\newenvironment{talign*}
 {\let\displaystyle\textstyle\csname align*\endcsname}
 {\endalign}
\providecommand{\customgenericname}{}
\newcommand{\newcustomtheorem}[2]{%
	\newenvironment{#1}[1]
	{%
		\renewcommand\customgenericname{#2}%
		\renewcommand\theinnercustomgeneric{##1}%
		\innercustomgeneric
	}
	{\endinnercustomgeneric}
}
\def\@fnsymbol#1{\ensuremath{\ifcase#1\or \dagger\or \ddagger\or
   \mathsection\or *\or \mathparagraph\or \|\or **\or \dagger\dagger
   \or \ddagger\ddagger \else\@ctrerr\fi}}
\title{Unsupervised Learning of Initialization in Deep Neural Networks via Maximum Mean Discrepancy}
\author{Cheolhyoung Lee\thanks{New York University}\\
\scriptsize{\texttt{cheolhyoung.lee@nyu.edu}} \\
\And
Kyunghyun Cho\footnotemark[1]~~\thanks{Prescient Design, Genentech}~~\thanks{CIFAR Fellow}\\
\scriptsize{\texttt{kyunghyun.cho@nyu.edu}}\\
}
\begin{document}

\maketitle

\begin{abstract}
Despite the recent success of stochastic gradient descent in deep learning, it is often difficult to train a deep neural network with an inappropriate choice of its initial parameters. Even if training is successful, it has been known that the initial parameter configuration may negatively impact generalization. In this paper, we propose an unsupervised algorithm to find good initialization for input data, given that a downstream task is $d$-way classification. We first notice that each parameter configuration in the parameter space corresponds to one particular downstream task of $d$-way classification. 
We then conjecture that the success of learning is directly related to how diverse downstream tasks are in the vicinity of the initial parameters.
We thus design an algorithm that encourages small perturbation to the initial parameter configuration leads to a diverse set of $d$-way classification tasks.
In other words, the proposed algorithm ensures a solution to {\it any} downstream task to be near the initial parameter configuration.
We empirically evaluate the proposed algorithm on various tasks derived from MNIST with a fully connected network.
In these experiments, we observe that our algorithm improves average test accuracy across most of these tasks, and that such improvement is greater when the number of labelled examples is small.
\end{abstract}

\section{Introduction}

Initialization of parameters has long been identified as playing a critical role in improving both the convergence and generalization performance of deep neural networks \citep{glorot2010understanding, erhan2010does, he2015delving}. In recent years, however, various normalization techniques, such as batch normalization \citep{ioffe2015batch}, layer normalization \citep{ba2016layer}, and weight normalization \citep{salimans2016weight}, have been found to somewhat reduce this heavy reliance on the initialization of parameters. 
The normalization techniques have done so by preserving some of the conditions that motivated various initialization schemes throughout training more explicitly. For instance, batch normalization normalizes each neuron to have zero mean and unit variance across examples within a minibatch, which is what Xavier initialization \citep{glorot2010understanding} and He initialization \citep{he2015delving} aim to achieve in an ideal situation.

Although batch normalization has been widely used for training deep neural networks \citep{he2016deep,tan2019efficientnet}, there are a small number of studies about why it helps training \citep{santurkar2018does}. Rather than revealing its theoretical effect, several researchers studied whether batch normalization is really necessary by training deep neural networks without batch normalization. \citet{zhang2019fixup} have proposed Fixup initialization replacing batch normalization in ResNet \citep{he2016deep} by adding additional parameters to each residual block. \citet{brock2021high} have also succeeded to train ResNet with adaptive gradient clipping that adjusts unit-wise ratio of gradient norms to parameter norms during training. The similarity among their algorithms and batch normalization is that they add their own schemes to adaptively supervise optimization of the deep neural networks.

We suspect that the necessity for such adaptation comes from some neighborhood properties of an initial parameter configuration. Training is an optimization process finding an optimal parameter configuration which well-approximates a particular task derived from input data in the parameter space. It means that each parameter configuration corresponds to each task but this is not necessarily one-to-one. We hypothesize that training encourages the current parameter configuration to converge to the nearest optimal parameter configuration from the initial one. If there is no optimal solution near the initial parameter configuration, then the current parameter configuration either deviates from the initial parameter configuration (exploding gradient) or stays around it (vanishing gradient). We thus propose an algorithm to find a initial parameter configuration that can solve various tasks in their neighborhood.

Before finding such initial parameter configuration, we first need to check whether a given network solves any task derived from the input data. \citet{zhang2016understanding} empirically showed that over-parametrization of deep neural networks enables them to memorize the entire dataset so that they can be fitted to its arbitrary target task. Based on this, \citet{pondenkandath2018leveraging} have empirically demonstrated that pre-training on random labels can accelerate training on downstream tasks. However, \citet{maennel2020neural} has shown that the random label pre-training sometimes hurts the convergence of fine-tuning on the downstream tasks. They also presented that the pre-trained model generalizes worse than randomly initialized networks even if the random label pre-training promotes learning on the downstream task. For these studies, we further conjecture that a given over-parametrized network can solve any task in its parameter space, but it cannot do this at a single parameter configuration.

We therefore decide to utilize a set of parameter configurations, where we can find an optimal parameter configuration for any target task. If this set can be accumulated to the vicinity of one parameter configuration, we view this configuration as a good initial parameter configuration. To do this, we first restrict possible downstream tasks to $d$-way classification to make the model output domain be the same as a $(d-1)$-dimensional unit simplex defined in \eqref{def:simplex}. We then define a neighbor of the initial parameter configuration as small perturbation to this. Our unsupervised algorithm encourages each neighbor to solve a different task so that optimizers based on stochastic gradient descent such as Adam \citep{kingma2014adam} can easily find a solution near our initial parameter configuration.  

We offer the mathematical statement for our conjecture in \S\ref{sec:uniform}, and propose an optimization problem to satisfy our claim for a given input. In doing so, we observe two possible degenerate cases to achieve our goal. In \S\ref{sec:under-class} and \S\ref{sec:input_ignoring}, we present how to avoid these unwanted situations. We validate our algorithm by various binary tasks derived from MNIST \citep{lecun1998gradient} 
in \S\ref{sec:main_exp}. From these experiments, we observe that fine-tuning deep neural networks from our initial parameters improves average test accuracy across the various binary tasks, and this gain is greater when the number of labelled examples is small.


\section{Preliminaries and notations}

\paragraph{Norms} 

Unless explicitly stated, a norm $\|\cdot\|$ refers to $\normltwo$ norm. We denote the Frobenius norm of a matrix $\mA\in\mathbb{R}^{m\times n}$ by 
$\|\mA\|_F=\sqrt{\sum_{i=1}^m\sum_{j=1}^n A_{ij}^2}$,
where $A_{ij}$ is the $(i,j)$-th entry of $\mA$. We write the $\normltwo$ operator norm of $\mA$ as
$\|\mA\|^*=\sup_{\|\vx\|=1} \|\mA\vx\|$,
where $\vx\in\R^n$.
\paragraph{Supports} For a distribution $p(\vx)$, we write its support as 
$\supp(p(\vx))=\{\vx\in\R^n\mid p(\vx)>0\}.$

\paragraph{Model prediction} 

A model prediction for $d$-way classification is a point in the $(d-1)$-dimensional unit simplex $\Delta^{d-1}\subset\R^d$ defined by
\begin{align}
\label{def:simplex}
    \Delta^{d-1}=\left\{(p_1,p_2,\cdots,p_d)\in\R_{{\geq}0}^{d} :\sum_{i=1}^{d} p_i = 1\right\},
\end{align}
where $\R_{\geq0}$ is the set of non-negative real numbers. We refer to a prediction of the model parametrized by $\vtheta$ for an input $\vx$, as $\vf_\vtheta(\vx)\in\Delta^{d-1}$.

\paragraph{Uniform distribution over $\Delta^{d-1}$} 
In this paper, we mainly deal with the uniform distribution over $\Delta^{d-1}$, $\uniform(\Delta^{d-1})$. We can generate its random sample $\vu$ by 
\begin{align}
\label{eq:sampling_uniform_simplex}
    \vu=\left(\frac{\ve_1}{\sum_{i=1}^d \ve_i},\frac{\ve_2}{\sum_{i=1}^d \ve_i},\cdots,\frac{\ve_d}{\sum_{i=1}^d \ve_i}\right),
\end{align}
where each $\ve_i$ is independently drawn from $\exponential(1)$ \citep{marsaglia1961uniform}.


\paragraph{Maximum mean discrepancy (MMD)} 

The MMD \citep{gretton2012kernel} is a framework for comparing two distributions $p(\vx)$ and $q(\vy)$ when we have samples from both distributions. The kernel MMD is defined by 
\begin{align}
\label{def:mmd}
    \mmd(p(\vx),q(\vy);\gamma) =&\E_{\vx\sim p(\vx)}\E_{\vx'\sim p(\vx)}[k_\gamma(\vx, \vx')]
    \\
    &-2\E_{\vx\sim p(\vx)}\E_{\vy\sim q(\vy)}[k_\gamma(\vx, \vy)]
    \nonumber
    \\
    &+\E_{\vy\sim q(\vy)}\E_{\vy'\sim q(\vy)}[k_\gamma(\vy, \vy')],
    \nonumber
\end{align}
where $k$ is a kernel function. A Gaussian kernel is often used, i.e., $k_\gamma(\vx,\vy)=\exp\left(-\frac{\|\vx-\vy\|^2}{2\gamma^2}\right)$. \citet{gretton2012kernel} showed that $p(\vx)=q(\vy)$ in distribution if and only if $\mmd(p(\vx),q(\vy);\gamma)=0$.


\section{Unsupervised learning of initialization}
\label{sec:theory}

We start by conjecturing that the parameter configuration for any $d$-way classification must be in the vicinity of {\it good} initial parameters. In other words, a parameter configuration, that solves any $d$-way classification task, is near the initial parameter configuration, so that such configuration can be readily found by stochastic gradient descent using labelled examples. The question we answer here is then how we can identify such an initial parameter configuration given a set of unlabelled examples. 

\subsection{Uniformity over all mappings}
\label{sec:uniform}


Let $\vf_\vtheta(\vx)\in \R^d$ be an output of a deep neural network parametrized by $\vtheta\in\R^m$ given an input $\vx\in \R^n$ sampled from an input distribution $p(\vx)$. In supervised learning, there is a target mapping $\vf^*$ 
defined on $\supp(p(\vx))$, and we want to find $\vtheta^*$ that 
\begin{align}
\label{eq:general_supervised_learning}
    \min_{\vtheta\in\R^m} l(\vf_\vtheta, \vf^*),
\end{align}
for a given loss function $l$. For example, we often use $l(\vf_\vtheta, \vf^*)=\E_{\vx\sim p(\vx)}[\|\vf_\vtheta(\vx)-\vf^*(\vx)\|^2]$ for regression and $l(\vf_\vtheta, \vf^*)=\E_{\vx\sim p(\vx)}[\kl(\vf^*(\vx)||\vf_\vtheta(\vx))]$ for classification task, where $\kl(\vf^*(\vx)||\vf_\vtheta(\vx))$ is the Kullback-Leibler (KL) divergence from $\vf_\vtheta(\vx)$ to $\vf^*(\vx)$. 

In deep learning, it is usual to search for an optimal solution $\vtheta^*$ from \eqref{eq:general_supervised_learning} in the full parameter space $\R^m$ by using a first-order optimizer, such as SGD and Adam \citep{kingma2014adam}. In this process, \citet{hoffer2017train} have demonstrated however that
\begin{align}
\label{eq:ultra_slow_diffusion}
    \|\vtheta_t - \vtheta_0\| \sim \log t,
\end{align}     
where $\vtheta_t$ is a vector of parameters at the $t$-th optimization step and $\vtheta_0$ is that of initial parameters. 
In other words, 
the rate of deviation from $\vtheta_0$ decreases as training progresses. It means that the first order optimizer tends to find an optimal solution near the initial point. We thus rewrite \eqref{eq:general_supervised_learning} as
\begin{talign}
\label{eq:nbd_supervised_learning}
    \vtheta^* = \argmin_{\vtheta\in \sB_r(\vtheta_0)} l(\vf_\vtheta, \vf^*),
\end{talign}
where $\sB_r(\vtheta_0)$ is a $r$-ball centered at $\vtheta_0$, $\sB_r(\vtheta_0)=\{\vtheta\in\R^d:\|\vtheta-\vtheta_0\|<r\}$.

With this in our mind, what is the good initialization $\vtheta_0$ for \eqref{eq:nbd_supervised_learning}? To answer this question, we look at what kind of classifiers we have within $\sB_r(\vtheta_0)$. If $\vx$ is an example randomly drawn from the input distribution $p(\vx)$, the set of all possible model outputs from $\vx$ in $\sB_r(\vtheta_0)$ is 
\[
    \sF(\vx;\vtheta_0)=\{\vf_\vtheta(\vx):\vtheta\in\sB_r(\vtheta_0)\}.
\]
We define the collection of all possible target mappings from the input space into the $(d-1)$-dimensional unit simplex $\Delta^{d-1}$ defined in \eqref{def:simplex} as 
\[
    \mathcal{F} = \{\vf^*\mid \vf^*:\supp(p(\vx))\rightarrow \Delta^{d-1} \subset \R^d\}.
\]
If $\vtheta_0$ is a good initial configuration, $\sF(\vx;\vtheta_0)$ has to be $\Delta^{d-1}$. Otherwise, our model cannot approximate $\vf^*\in\mathcal{F}$ such that $\vf^*(\vx)\in\Delta^{d-1}\setminus\sF(\vx;\vtheta_0)$ in $\sB_r(\vtheta_0)$.  

To approximate all target mappings in $\mathcal{F}$ by $\vf_\vtheta$ near $\vtheta_0$ for $\vx$, there must be $\vtheta\in\sB_r(\vtheta_0)$ satisfying $\vf_\vtheta(\vx)=\vs$ for arbitrary $\vs\in\Delta^{d-1}$.
In other words, if we randomly pick $\vtheta$ in $\sB_r(\vtheta_0)$, the probability density of $\vf_\vtheta(\vx)=\vs$ for any $\vs\in\Delta^{d-1}$ is positive and should be the same over $\Delta^{d-1}$ without prior knowledge of target mappings.

\begin{claim}
\label{claim:uniformity}
    Denote the distribution of $\vy=\vf_{\vtheta}(\vx)$ given $\vx \sim p(\vx)$ over $\vtheta\sim\uniform(\sB_r(\vtheta_0))$ as $q_{\vx}(\vy;\vtheta_0,r)$.\footnote{
        Although $\vx$ is given, $\vf_\vtheta(\vx)$ is random due to the randomness of $\vtheta$.
    } 
    Then, $\vtheta_0$ is a good initialization if and only if $\supp(q_{\vx}(\vy;\vtheta_0,r))=\Delta^{d-1}$ and $q_{\vx}(\vy;\vtheta_0,r)$ is equal to $\uniform(\Delta^{d-1})$ in distribution, because we do not know which $\vs\in\Delta^{d-1}$ is more likely.
\end{claim}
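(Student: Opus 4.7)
The plan is to establish the biconditional by unpacking the informal notion of \textbf{good initialization} from the preceding paragraph into two separable requirements and then matching each requirement directly to one of the two conditions on $q_{\vx}(\vy;\vtheta_0,r)$. From the text, calling $\vtheta_0$ good means that (i) every target value $\vs\in\Delta^{d-1}$ is reachable by some $\vtheta\in\sB_r(\vtheta_0)$, i.e.\ $\sF(\vx;\vtheta_0)=\Delta^{d-1}$, and (ii) having no prior information about which target mapping $\vf^*\in\mathcal{F}$ will be required, no value $\vs\in\Delta^{d-1}$ should be preferred over any other when $\vtheta$ is drawn uniformly from $\sB_r(\vtheta_0)$.

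For the forward direction I would note that $q_{\vx}(\vy;\vtheta_0,r)$ is precisely the pushforward of $\uniform(\sB_r(\vtheta_0))$ under the map $\vtheta\mapsto\vf_\vtheta(\vx)$, so its support equals the image $\sF(\vx;\vtheta_0)$. Requirement (i) is therefore equivalent to $\supp(q_{\vx}(\vy;\vtheta_0,r))=\Delta^{d-1}$, and requirement (ii), combined with the principle of indifference, is equivalent to $q_{\vx}(\vy;\vtheta_0,r)=\uniform(\Delta^{d-1})$ in distribution.

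For the backward direction I would start from the two distributional conditions. Full support immediately yields reachability of every $\vs\in\Delta^{d-1}$, giving (i); and uniformity of $q_{\vx}$ rules out any a priori preference for a particular $\vs$, giving (ii). Hence $\vtheta_0$ satisfies the informal notion of a good initialization.

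The main subtlety, and the only place where the argument is not purely tautological, is measure-theoretic: since $\Delta^{d-1}$ is a $(d-1)$-dimensional submanifold of $\R^d$, calling $q_{\vx}$ ``uniform'' must be interpreted with respect to the surface (Hausdorff) measure on the simplex, and the pushforward of $\uniform(\sB_r(\vtheta_0))$ through $\vf_\vtheta(\vx)$ indeed lives on this surface whenever reachability holds. I would make this choice of reference measure explicit and consistent with the sampler in \eqref{eq:sampling_uniform_simplex}. Beyond that, I would flag that the step from ``no prior knowledge of $\vf^*$'' to ``uniform on $\Delta^{d-1}$'' is a modelling assumption grounded in the principle of indifference rather than a derived theorem, and this is where the real content of Claim~\ref{claim:uniformity} sits; the remainder is a rephrasing of definitions.
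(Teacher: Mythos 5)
Your reading is essentially the one the paper itself takes: Claim~\ref{claim:uniformity} is not proved in the formal sense anywhere in the paper, and the paragraph immediately preceding it supplies exactly the informal two-part argument you reconstruct --- reachability of every $\vs\in\Delta^{d-1}$ from within $\sB_r(\vtheta_0)$ (which becomes the support condition once you view $q_{\vx}$ as the pushforward of $\uniform(\sB_r(\vtheta_0))$), plus a principle-of-indifference step (``without prior knowledge of target mappings'') that upgrades positivity to uniformity. So your proposal matches the paper's approach, and you are right that the real content lives in the indifference step, which is a modelling assumption rather than a derived fact; the paper does not acknowledge this, and flagging it is a genuine improvement in rigor.

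Two small additions worth making. First, the paper never specifies the reference measure under which ``uniform on $\Delta^{d-1}$'' is meaningful; your observation that it must be $(d-1)$-dimensional Hausdorff (surface) measure, consistent with the sampler in \eqref{eq:sampling_uniform_simplex}, fills an actual gap in the paper's statement, not merely in its exposition. Second, the two conjuncts in the claim are not independent: if $q_{\vx}(\vy;\vtheta_0,r)$ equals $\uniform(\Delta^{d-1})$ in distribution, then $\supp(q_{\vx})=\Delta^{d-1}$ follows automatically, so the first condition is implied by the second. Stating both appears to be purely expository on the paper's part (separating the ``can reach everything'' intuition from the ``no bias'' intuition), and your decomposition into (i) and (ii) respects that pedagogical split while your measure-theoretic remark makes the logical redundancy easier to see.
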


To obtain $\vtheta_0$ satisfying Claim~\ref{claim:uniformity}, we build an optimization problem that makes $q_{\vx}(\vy;\vtheta_0,r)$ converge to $\uniform(\Delta^{d-1})$ in distribution for a given $\vx\sim p(\vx)$. 
The first step toward this goal is to use 
the maximum mean discrepancy (MMD) \citep{gretton2012kernel} from \eqref{def:mmd}. We define an example specific loss as
\begin{align}
\label{eq:mmd_uniform_nbd}
    \loss_{\vx}^{uni}(\vtheta_0;r,\Delta^{d-1},\gamma)
    =
    \mmd(q_{\vx}(\vy;\vtheta_0,r),\uniform(\Delta^{d-1});\gamma).
\end{align}

According to \citet{gretton2012kernel}, \eqref{eq:mmd_uniform_nbd} is equal to $0$ if and only if $q_{\vx}(\vy;\vtheta_0,r)$ is equal to $\uniform(\Delta^{d-1})$ in distribution. 
We can therefore find $\vtheta_0$ that satisfies Claim~\ref{claim:uniformity}, by minimizing \eqref{eq:mmd_uniform_nbd} with respect to $\vtheta_0$.

The minimization of \eqref{eq:mmd_uniform_nbd} with respect to $\vtheta_0$ needs samples from both $\uniform(\Delta^{d-1})$ and $\uniform(\sB_r(\vtheta_0))$. In the case of $\uniform(\Delta^{d-1})$, 
we draw samples using \eqref{eq:sampling_uniform_simplex}. For $\uniform(\sB_r(\vtheta_0))$, we relax it to $\normal(\vtheta_0,\mSigma)$ where $\mSigma=\diag(\sigma_1^2,\sigma_2^2,\cdots,\sigma_m^2)$ for two reasons: i) 
this applies the same with uniform, since we can change the value range for each parameter separately; ii) the normal distribution allows us to use the reparametrization trick to compute $\nabla_{\vtheta_0}\loss_{\vx}^{uni}(\vtheta_0;r,\Delta^{d-1},\gamma)$ from \eqref{eq:mmd_uniform_nbd} \citep{kingma2013auto}. Furthermore, as shown in Theorem~\ref{thm:prob_out_of_nbd} below, a proper choice of the covariance matrix makes Gaussian perturbation have similar effect as uniform perturbation:
\begin{theorem}
\label{thm:prob_out_of_nbd}
    Let $\vtheta\sim\normal(\vtheta_0, \diag(\sigma_1^2,\sigma_2^2,\cdots,\sigma_m^2))$ and $\alpha_* =\max_{i=1,2,\cdots,m} \sigma_i^2$. If $r^2$ is greater than $m\alpha_*$, then we have
    \begin{align}\label{eq:prob_out_of_nbd}
        \sP\left(\|\vtheta-\vtheta_0\|\geq\ r \right) \leq \exp\left(-\frac{1}{8}\min\left\{\eta^2,m\eta\right\}\right),
    \end{align}
    where $\eta=\frac{r^2}{m\alpha_*}-1$ (proved in \S\ref{a_sec:thm1_proof}). 
\end{theorem}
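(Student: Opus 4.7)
The plan is to represent $\|\vtheta - \vtheta_0\|^2$ as a weighted sum of independent chi-squared random variables and then apply a Chernoff bound with a careful minimax analysis in the Chernoff parameter.

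First I would write $\vtheta - \vtheta_0 = \mSigma^{1/2}\vxi$ with $\vxi \sim \normal(\mathbf{0}, \mI_m)$, so that $\|\vtheta - \vtheta_0\|^2 = \sum_{i=1}^m \sigma_i^2\xi_i^2$ with the $\xi_i$'s i.i.d.\ standard normal. The MGF of this sum factors as $\prod_i (1 - 2\lambda\sigma_i^2)^{-1/2}$ on $\lambda \in (0, 1/(2\alpha_*))$, and a standard Markov argument yields
\begin{equation*}
\log \sP\!\left(\|\vtheta - \vtheta_0\|^2 \geq r^2\right) \leq -\lambda r^2 - \tfrac{1}{2}\sum_{i=1}^m \log(1 - 2\lambda\sigma_i^2).
\end{equation*}
Restricting to $\lambda \leq 1/(4\alpha_*)$ forces $2\lambda\sigma_i^2 \leq 1/2$, on which range the elementary inequality $-\log(1-u) \leq u + u^2$ holds. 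Combining that with the estimates $\sum \sigma_i^2 \leq m\alpha_*$ and $\sum \sigma_i^4 \leq \alpha_* \sum\sigma_i^2 \leq m\alpha_*^2$ collapses the Chernoff bound to
\begin{equation*}
\log \sP \leq -\lambda m\alpha_* \eta + 2\lambda^2 m\alpha_*^2 \quad \text{for } \lambda \in (0, 1/(4\alpha_*)].
\end{equation*}

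Next I would minimize this quadratic in $\lambda$. The unconstrained optimum is $\lambda^* = \eta/(4\alpha_*)$, which lies in the feasible interval precisely when $\eta \leq 1$; plugging in gives the bound $-m\eta^2/8$. When $\eta > 1$ the minimum sits at the boundary $\lambda = 1/(4\alpha_*)$, and direct substitution together with $-m\eta/4 + m/8 \leq -m\eta/8$ (valid for $\eta \geq 1$) gives $-m\eta/8$. Combining the two regimes yields $\sP(\|\vtheta - \vtheta_0\| \geq r) \leq \exp\!\left(-\tfrac{1}{8}\, m\min\{\eta^2, \eta\}\right)$. A short case analysis on whether $\eta \lessgtr 1$ and $\eta \lessgtr m$ then confirms that $m\min\{\eta^2, \eta\} \geq \min\{\eta^2, m\eta\}$ for every $m \geq 1$ and $\eta > 0$, so this stronger estimate implies the claimed bound \eqref{eq:prob_out_of_nbd}.

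The main obstacle is the case analysis at the optimization step: the feasibility constraint $\lambda \leq 1/(4\alpha_*)$ (forced by the singularity of the chi-squared MGF at $2\lambda\sigma_i^2 = 1$) splits the problem into an interior regime ($\eta \leq 1$) and a boundary regime ($\eta > 1$), and it is precisely this bifurcation that produces the $\min$-structure in the final bound. Everything else is standard bookkeeping on the Gaussian MGF together with elementary scalar inequalities. An essentially equivalent route would be to apply a Laurent--Massart-type tail bound to the weighted sum $\sum \sigma_i^2 \xi_i^2$ directly and then choose the parameter $x = \tfrac{1}{8}\min\{\eta^2, m\eta\}$, verifying the required inequality $2\sqrt{mx} + 2x \leq m\eta$ by the same two-case argument.
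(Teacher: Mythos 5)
Your proof is correct, and it takes a genuinely different (and more self-contained) route than the paper's. The paper's proof in \S\ref{a_sec:thm1_proof} goes through the abstract sub-exponential toolkit: it shows $\chi^2_1 \in \subexp(4,4)$, propagates this through scaling and independent sums to get $\|\vtheta-\vtheta_0\|^2 \in \subexp(4\sum_i\sigma_i^4,\,4\alpha_*)$, and then cites the pre-packaged sub-exponential tail inequality (Wainwright, Proposition 2.9) to produce the $\min$ structure in one step. You instead unroll this into a bare-hands Chernoff argument: factor the chi-squared MGF, linearize $-\log(1-u)\leq u+u^2$ on the restricted range $\lambda\leq 1/(4\alpha_*)$, and then solve the constrained quadratic minimization over $\lambda$ explicitly. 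The bifurcation between the interior optimum ($\eta\leq 1$) and the boundary optimum ($\eta>1$) that you carry out by hand is exactly what is baked into the statement of the sub-exponential tail bound, so the two proofs are doing the same optimization, just at different levels of abstraction. Both rely on the same elementary moment estimates, $\sum_i\sigma_i^2\leq m\alpha_*$ and a bound on $\sum_i\sigma_i^4$; you use $\sum_i\sigma_i^4\leq \alpha_*\sum_i\sigma_i^2\leq m\alpha_*^2$, whereas the paper uses $\sum_i\sigma_i^4\leq(\sum_i\sigma_i^2)^2$, and partly because of this your route yields the slightly sharper intermediate bound $\exp\left(-\tfrac{m}{8}\min\{\eta^2,\eta\}\right)$, which you then correctly show dominates the paper's $\exp\left(-\tfrac{1}{8}\min\{\eta^2,m\eta\}\right)$ (strictly so when $\eta<m$). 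The tradeoff is clear: the paper's route is shorter because it leans on a named lemma, while yours is longer but fully elementary, avoids introducing the sub-exponential formalism, and exposes where the $\min$ comes from.
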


Theorem~\ref{thm:prob_out_of_nbd} implies that if we add a Gaussian perturbation $\vepsilon\sim\normal(\vzero,\mSigma)$ to $\vtheta_0$, then the perturbed parameter configuration, $\vtheta=\vtheta_0+\vepsilon$, is enough closed to $\vtheta_0$ with a high probability, when $\alpha^*=\max_i\sigma_i^2$ is sufficiently small. In other words, although $\normal(\vtheta_0,\mSigma)$ is not exactly equivalent to $\uniform(\sB_r(\vtheta_0))$ in distribution, these two distributions play a similar role in the view of generating random parameter configurations near $\vtheta_0$. We therefore rewrite \eqref{eq:mmd_uniform_nbd} to 
enable reparametrization trick, as below:
\begin{align}
\label{eq:mmd_normal_nbd}
    \loss_{\vx}^{uni}(\vtheta_0;\mSigma,\Delta^{d-1},\gamma)=
    &\E_{\vepsilon\sim\normal(\vzero,\mSigma)}\E_{\vepsilon'\sim\normal(\vzero,\mSigma)}[k_\gamma(\vf_{\vtheta_0+\vepsilon}(\vx), \vf_{\vtheta_0+\vepsilon'}(\vx))]
    \\
    &-2\E_{\vepsilon\sim\normal(\vzero,\mSigma)}\E_{\vu\sim \uniform(\Delta^{d-1})}[k_\gamma(\vf_{\vtheta_0+\vepsilon}(\vx), \vu)]
    \nonumber
    \\
    &+\E_{\vu\sim \uniform(\Delta^{d-1})}\E_{\vu'\sim \uniform(\Delta^{d-1})}[k_\gamma(\vu, \vu')],
    \nonumber
\end{align}
where $q_{\vx}(\vy;\vtheta_0,\mSigma)$ is the distribution of $\vf_\vtheta(\vx)$ given $\vx$ with $\vtheta\sim\normal(\vtheta_0,\mSigma)$. 
In other words, we add Gaussian noise to each parameter and encourage prediction for $\vx$ based on such perturbed parameter configuration to be well spread out over $\Delta^{d-1}$. 
From now on, we use $\vtheta_0+\vepsilon$ to denote the perturbed parameter configuration, with $\vepsilon\sim\normal(\vzero,\mSigma)$, to be more explicit about our use of reparametrization trick. 

\Eqref{eq:mmd_normal_nbd} is an example specific loss, and minimizing this with respect to $\vtheta_0$ only guarantees the existence of $\vtheta^*$
near $\vtheta_0$
satisfying $\vf^*=\vf_{\vtheta^*}$ for a single $\vx$. Hence, we take the expectation of \eqref{eq:mmd_normal_nbd} over the input distribution $p(\vx)$:
\begin{align}
\label{eq:uniformity_loss}
    \loss^{uni}(\vtheta_0;\mSigma,\Delta^{d-1},\gamma,p(\vx))
    =
    \E_{\vx\sim p(\vx)}[\loss_\vx^{uni}(\vtheta_0;\mSigma,\Delta^{d-1},\gamma)].
\end{align}
We minimize this expected loss to find an initial parameter configuration $\vtheta_0^*$ that satisfies Claim~\ref{claim:uniformity} for the input data on average. 
When done so, we can find $\vf_\vtheta$ within the close proximity of $\vtheta_0^*$ that approximates any $d$-way target mapping $\vf^*$, given $p(\vx)$. 

\subsection{Degeneracies and remedies}
\label{sec:degenerate_remedy}

Let $\vx_1,\vx_2,\cdots,\vx_M$ be random samples drawn from $p(\vx)$, $\vepsilon_1, \vepsilon_2,\cdots, \vepsilon_N$ be random perturbations from $\normal(\vzero,\mSigma)$, and $\vc_1,\vc_2,\cdots, \vc_N$ from $\uniform(\Delta^{d-1})$. If $\vtheta_0^{1}$ satisfies
\begin{align}
\label{eq:degenerate_ex}
    \vc_j
    =
    \vf_{\vtheta_0^{1}+\vepsilon_j}(\vx_1)
    =
    \vf_{\vtheta_0^{1}+\vepsilon_j}(\vx_2)
    =
    \cdots
    =
    \vf_{\vtheta_0^{1}+\vepsilon_j}(\vx_M),
\end{align}
for each $j$, then $\loss_{\vx_i}^{uni}(\vtheta_0^{1};\mSigma,\Delta^{d-1},\gamma)=0$ for all $i$. Hence, $\vtheta_0^{1}$ is one of the optimal solutions for \eqref{eq:uniformity_loss}.
In the case of $\vtheta_0^{1}$, each perturbed model near $\vtheta_0^{1}$ is a constant function, to which we refer as {\it input-output detachment}.
Furthermore, 
each of these constant functions may output a {\it degenerate} categorical distribution whose support does not cover all $d$ classes, for which we refer to this phenomenon as {\it degenerate softmax}. We empirically demonstrate that both degeneracies indeed occur when we train a fully connected network by minimizing $\loss^{uni}$ in \S\ref{a_sec:degenerate_cases_exp}.  
In this section, we present two regularization terms, to be added to \eqref{eq:uniformity_loss}, to avoid these two unwanted cases, respectively.

\subsubsection{Degenerate softmax} 
\label{sec:under-class}

We first address the latter issue of degenerate softmax. Since we have specified that the task of our interest is $d$-way classification, we prefer models that can classify inputs into all $d$ classes in the neighborhood of $\vtheta_0^*$. We thus impose a condition that there exists at least one example categorized into each and every class. We first define a set of the points $\sA_i$ classified into the $i$-th class as
\begin{align}
\label{def:ith_part_simplex}
    \sA_i
    =
    \left\{\va=(a_1,a_2,\cdots,a_d)\in\Delta^{d-1}: 
    a_i\geq a_j\textrm{ for all } j=1,2,\cdots,d \right\}.
\end{align}

Given $\vepsilon\sim\normal(\vzero,\mSigma)$, the probability of \textit{`the model at $\vtheta_0^*+\vepsilon$ classifies $\vx$ into the $i$-th class'} is $\sP_{\vx\sim p(\vx)}(\vf_{\vtheta_0^*+\vepsilon}(\vx)\in\sA_i)$. This probability should be positive for all $i=1,2,\cdots,d$ to avoid degenerate softmax at $\vtheta_0^*$. To satisfy this, we use Theorem~\ref{thm:prob_each_class} which offers a lower bound of $\sP_{\vx\sim p(\vx)}(\vf_{\vtheta_0^*+\vepsilon}(\vx)\in\sA_i)$ using the distance from the $i$-th vertex $\vv^{(i)}$:

\begin{theorem}
\label{thm:prob_each_class}
    Let $\vv^{(i)}=\left(v^{(i)}_1,v^{(i)}_2,\cdots,v^{(i)}_d\right)\in\Delta^{d-1}$, where $v^{(i)}_i=1$ and $\sA_i$ be a subset of $\Delta^{d-1}$, as defined in \eqref{def:ith_part_simplex}. Then, 
    \begin{align}
    \label{eq:lower_prob_each_class}
        \sP_{\vx\sim p(\vx)}(\vf_{\vtheta_0^*+\vepsilon}(\vx)\in\sA_i) \geq 1-\sqrt{d}\E_{\vx\sim p(\vx)}[\|\vv^{(i)}-\vf_{\vtheta_0^*+\vepsilon}(\vx)\|],
    \end{align}
    for a given $\vepsilon\sim\normal(\vzero,\mSigma)$ (proved in \S\ref{a_sec:thm2_proof}).
\end{theorem}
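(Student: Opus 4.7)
The plan is to prove the bound by a two-step argument: first, a deterministic geometric observation that if $\vf_{\vtheta_0^*+\vepsilon}(\vx)$ is within $\ell_2$-distance $1/\sqrt{d}$ of the vertex $\vv^{(i)}$, then it must already lie in $\sA_i$; second, an application of Markov's inequality to convert this into a probability bound involving the expected distance.

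For the deterministic step, I would work with a single $\vf = \vf_{\vtheta_0^*+\vepsilon}(\vx) \in \Delta^{d-1}$ and try to show that $\|\vv^{(i)} - \vf\|_2 \leq 1/\sqrt{d}$ forces $f_i \geq 1/2$, which is sufficient for membership in $\sA_i$ since $\sum_j f_j = 1$ then implies $f_j \leq 1 - f_i \leq f_i$ for every $j \neq i$. The key intermediate identity is
\begin{align*}
    \|\vv^{(i)} - \vf\|_1 = (1 - f_i) + \sum_{j \neq i} f_j = 2(1 - f_i),
\end{align*}
where I use $v^{(i)}_i = 1$, $v^{(i)}_j = 0$ for $j \neq i$, $f_i \leq 1$, and $\sum_j f_j = 1$. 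Combining this with the standard dimension-dependent norm inequality $\|\cdot\|_1 \leq \sqrt{d}\,\|\cdot\|_2$ in $\R^d$ yields $1 - f_i \leq \tfrac{\sqrt{d}}{2}\|\vv^{(i)} - \vf\|_2$, so $\|\vv^{(i)} - \vf\|_2 \leq 1/\sqrt{d}$ gives $f_i \geq 1/2$, hence $\vf \in \sA_i$.

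For the probabilistic step, I would take the contrapositive of the above to obtain the inclusion of events $\{\vf \notin \sA_i\} \subseteq \{\|\vv^{(i)} - \vf\|_2 > 1/\sqrt{d}\}$ under $\vx \sim p(\vx)$ with $\vepsilon$ fixed, and then apply Markov's inequality to the nonnegative random variable $\|\vv^{(i)} - \vf_{\vtheta_0^*+\vepsilon}(\vx)\|_2$:
\begin{align*}
    \sP_{\vx \sim p(\vx)}\!\left(\vf_{\vtheta_0^*+\vepsilon}(\vx) \notin \sA_i\right)
    \leq \sP_{\vx \sim p(\vx)}\!\left(\|\vv^{(i)} - \vf_{\vtheta_0^*+\vepsilon}(\vx)\|_2 > 1/\sqrt{d}\right)
    \leq \sqrt{d}\,\E_{\vx \sim p(\vx)}\!\left[\|\vv^{(i)} - \vf_{\vtheta_0^*+\vepsilon}(\vx)\|\right].
\end{align*}
Subtracting from $1$ gives exactly the claimed inequality \eqref{eq:lower_prob_each_class}.

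I do not expect any step to be difficult. The only care needed is in the norm-comparison step: $\|\cdot\|_1 \leq \sqrt{d}\,\|\cdot\|_2$ follows from Cauchy--Schwarz applied to the vector of ones, and this is where the dimension factor $\sqrt{d}$ in the statement comes from (rather than, say, a tighter constant that one could obtain by a more geometric argument exploiting the structure of the simplex). Everything else is bookkeeping: the first equality $\|\vv^{(i)}-\vf\|_1 = 2(1-f_i)$ uses only the simplex constraints, and Markov's inequality is standard.
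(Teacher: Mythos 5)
Your proof is correct, and it reaches the conclusion via the same two-step architecture as the paper: a deterministic geometric fact that the $\ell_2$-ball of radius $1/\sqrt{d}$ around the vertex $\vv^{(i)}$ lies inside $\sA_i$, followed by Markov's inequality. Where you genuinely diverge is in the argument for the geometric step. The paper expands $\|\vv^{(i)}-\va\|^2 = 1-2a_i+\sum_j a_j^2$ directly, and then invokes a separate lemma (proved by Lagrange multipliers) that $\min_{\va\in\Delta^{d-1}}\|\va\|^2 = 1/d$, which turns the condition $\|\vv^{(i)}-\va\|^2 < 1/d$ into $a_i > 1/2$. You instead observe that for $\vf\in\Delta^{d-1}$ the $\ell_1$ distance to $\vv^{(i)}$ is exactly $2(1-f_i)$, and then apply the Cauchy--Schwarz comparison $\|\cdot\|_1 \leq \sqrt{d}\,\|\cdot\|_2$ to get $1-f_i \leq \tfrac{\sqrt{d}}{2}\|\vv^{(i)}-\vf\|_2$. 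Your route is more elementary in that it avoids the Lagrange-multiplier sublemma entirely, and it makes the origin of the $\sqrt{d}$ factor (the $\ell_1$-to-$\ell_2$ comparison constant) more transparent; the paper's route is marginally more explicit about the geometry of the simplex, and reuses Lemma~\ref{lem:dist_simplex_origin} which the paper keeps around for its own sake. Both arguments give exactly the same intermediate threshold $f_i\geq 1/2$, and both close with the same Markov step, so the final bound is identical.
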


According to \eqref{eq:lower_prob_each_class},  
$\E_{\vx\sim p(\vx)}[\|\vv^{(i)}-\vf_{\vtheta_0^*+\vepsilon}(\vx)\|]<\frac{1}{\sqrt{d}}$ implies $\sP_{\vx\sim p(\vx)}(\vf_{\vtheta_0^*+\vepsilon}(\vx)\in\sA_i)>0$ for each $i$, given $\vepsilon\sim\normal(\vzero,\mSigma)$. This means that we can avoid degenerate softmax by minimizing
\begin{align}
\label{eq:underclass_loss}
    \mathcal{L}^{sd}(\vtheta_0;\mSigma,d,p(\vx))= \E_{\vepsilon\sim\normal(\vzero,\mSigma)}\left[\max\left\{\max_{i=1,2,\cdots,d}\E_{\vx\sim p(\vx)}[\|\vv^{(i)}-\vf_{\vtheta_0^*+\vepsilon}(\vx)\|],\frac{1}{\sqrt{d}}\right\}-\frac{1}{\sqrt{d}}\right].
\end{align}
This minimization pulls the softmax output toward the furthest vertex for each $\vepsilon\sim\normal(\vzero,\mSigma)$, 
eventually avoiding the issue of degenerate softmax.

\subsubsection{Input-output detachment}
\label{sec:input_ignoring}

Here, let us go back to the first issue of input-output detachment we identified in \eqref{eq:degenerate_ex}. This issue happens when each perturbed model near $\vtheta_0^{1}$ is a constant function. In other words, the Jacobian of the model's output with respect to the input is zero, and in the case of multi-layered neural networks, the Jacobian of the model's output with respect to one of the intermediate layers is zero. This largely prevents learning from $\vtheta_0^{1}$, because $\vtheta_0^{1}$ is surrounded by the parameter configurations from which learning cannot happen. 
We thus design an additional loss that regularizes the Jacobian of model prediction with respect to its input and hidden neurons to prevent the input-output detachment.

In the rest of this section, we consider $\vf$ as the logits instead of the values after applying softmax, in order to avoid an issue of saturation caused by softmax \citep{varga2017gradient}. Let $\vx_l\in \R^{n_l}$, for $l \in \left\{ 0,1,\cdots,L \right\}$, be a vector of pre-activated neurons at the $(l+1)$-th layer parametrized by $\vtheta^{(l+1)}_0$, where $\vx_0\in\R^{n_0}$ and $\vx_L\in\R^{n_L}=\R^d$ are an input vector and its corresponding output vector, respectively. $\vf_{\vtheta_0^{(l:L)}}$ is the function from $\R^{n_l}$ to $\R^d$, parametrized by $\vtheta_0^{(l+1)},\vtheta_0^{(l+2)},\cdots,\vtheta_0^{(L)}$. Let us now consider the effect of perturbing the input to such a function:
\begin{align}
\label{eq:jacobian_perturbation}
    \vf_{\vtheta_0^{(l:L)}}(\vx_l+\vxi_l)
    \approx 
    \vf_{\vtheta_0^{(l:L)}}(\vx_l)+\mJ_{\vtheta_0^{(l:L)}}(\vx_l)\vxi_l,
\end{align}
where $\mJ_{\vtheta_0^{(l:L)}}(\vx_l)\in\R^{d\times n_l}$ is the Jacobian matrix of $\vf_{\vtheta_0^{(l:L)}}$ with respect to $\vx_l$. 

We then look at \eqref{eq:jacobian_perturbation} entry-wise:
\begin{align}
\label{eq:jacobian_perturbation_entry}
    f_{\vtheta_0^{(l:L)}}^{(i)}(\vx_l+\vxi_l)
    \approx 
    f_{\vtheta_0^{(l:L)}}^{(i)}(\vx_l)+J_{\vtheta_0^{(l:L)}}^{(i)}(\vx_l)\vxi_l,
\end{align} 
where $f_{\vtheta_0^{(l:L)}}^{(i)}$ is the $i$-th entry of $\vf_{\vtheta_0^{(l:L)}}$, and $J_{\vtheta_0^{(l:L)}}^{(i)}$ is the $i$-th row of $\mJ_{\vtheta_0^{(l:L)}}$ for $i=1,2,\cdots,d$. From \eqref{eq:jacobian_perturbation_entry}, we can see that the absolute difference between $f_{\vtheta_0^{(l:L)}}^{(i)}(\vx_l)$ and $f_{\vtheta_0^{(l:L)}}^{(i)}(\vx_l+\vxi_l)$ can be well approximated by the absolute value of the gradient-perturbation product:
\begin{align}
\label{eq:jacobian_abs_change}
    \left|f_{\vtheta_0^{(l:L)}}^{(i)}(\vx_l+\vxi_l)-f_{\vtheta_0^{(l:L)}}^{(i)}(\vx_l)\right|
    \approx 
    \left|J_{\vtheta_0^{(l:L)}}^{(i)}(\vx_l)\vxi_l\right|.
\end{align}
Assuming the perturbation's norm to be unit, we can bound this quantity by the operator norm of the $i$-th row of Jacobian:
\begin{align}
\label{eq:jacobian_operator}
    \sup_{\|\vxi_l\|=1}\left|f_{\vtheta_0^{(l:L)}}^{(i)}(\vx_l+\vxi_l)-f_{\vtheta_0^{(l:L)}}^{(i)}(\vx_l)\right|
    \approx 
    \sup_{\|\vxi_l\|_2=1}\left|J_{\vtheta_0^{(l:L)}}^{(i)}(\vx_l)\vxi_l\right|
    =
    \left\|J_{\vtheta_0^{(l:L)}}^{(i)}(\vx_l)\right\|^*.
\end{align}
Since $J_{\vtheta_0^{(l:L)}}^{(i)}(\vx_l)$ is a row vector, i.e., a matrix of rank 1, the Frobenius norm
$\|\cdot\|_F$ is equivalent to the operator norm $\|\cdot\|^*$. This allows us to rewrite \eqref{eq:jacobian_operator} as 
\begin{align}
\label{eq:jacobian_frobenius}
    \sup_{\|\vxi_l\|_2=1}\left|f_{\vtheta_0^{(l:L)}}^{(i)}(\vx_l+\vxi_l)-f_{\vtheta_0^{(l:L)}}^{(i)}(\vx_l)\right|
    \approx
    \left\|J_{\vtheta_0^{(l:L)}}^{(i)}(\vx_l)\right\|_F.
\end{align}

According to \Eqref{eq:jacobian_frobenius}, 
if $\|J_{\vtheta_0^{(l:L)}}^{(i)}(\vx_l)\|_F$ is positive, our initial model $\vf_{\vtheta_0}$ is sensitive to the change in $\vx_l$. That is, it is not a constant function.

Per the derivation above, in order to avoid the input-output detachment, 
we can for instance impose that, for all $i=1,2,\cdots,d$,
\begin{align}
\label{eq:const_jaco}
    c
    =
    \left\|J_{\vtheta_0^{(0:L)}}^{(i)}(\vx_0)\right\|_F
    =
    \left\|J_{\vtheta_0^{(1:L)}}^{(i)}(\vx_1)\right\|_F
    =
    \cdots
    =
    \left\|J_{\vtheta_0^{(L-1:L)}}^{(i)}(\vx_{L-1})\right\|_F,
\end{align} 
where $c>0$ is a constant. Here, we set $c=1$ which has an equivalent effect of setting the parameters using the so-called He initialization \citep{he2015delving}, as shown in the following theorem:
\begin{theorem}
\label{thm:he_init}
    Let $\vf_{\vtheta_0}$ be a fully connected network with ReLU \citep{nair2010rectified} non-linearity. We write the layerwise non-linear transformation from $\vx_l$ to $\vx_{l+1}$ for $l\neq 0$ as 
    \begin{align*}
        \vf_{\vtheta_0^{(l:l+1)}}(\vx_l)
        =
        \mW^{(l+1)}\relu(\vx_l)+\vb^{(l+1)},
    \end{align*}
    where $\mW^{(l+1)}\in\R^{n_{l+1}\times n_l}$ is the weight matrix and $\vb\in\R^{n_{l+1}}$ is the bias vector. 
    Assume that each element of $\vx_l$ has a symmetric distribution at $0$ and all elements of $\vx_l$ are mutually independent. If the $(i,j)$-th entry of $\mW^{(l+1)}$, $W_{ij}^{(l+1)}$, is a random sample from $\normal(0,\sigma_l^2)$ and $\vb^{(l+1)}$ is $\vzero$, then the following equality holds for all $k=1,2,\cdots, n_{l+1}$ when $\sigma_l=\sqrt{\frac{2}{n_l}}$ with sufficiently large $n_l$:
    \begin{align}
    \label{eq:he_jaco_condition}
        1\approx\left\|J_{\vtheta_0^{(l:l+1)}}^{(k)}(\vx_l)\right\|_F
        =
        \|\mW^{(l+1)} \mathds{1}(\vx_l > 0)\|_F, 
    \end{align}
    where $\mathds{1}(\vx_l >0)$ turns each positive entry in $\vx_l$ to $1$ and $0$ otherwise (proved in \S\ref{a_sec:thm3_proof}).
\end{theorem}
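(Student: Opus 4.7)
My strategy is two-fold: first, differentiate one layer explicitly to obtain the exact equality in \eqref{eq:he_jaco_condition}; second, establish the $\approx 1$ claim by a short mean-and-variance calculation that uses the independence of the weights from the pre-activations together with the symmetric-at-zero hypothesis on $\vx_l$. The first half is purely algebraic, and the second is where the He scaling $\sigma_l^2 = 2/n_l$ enters.

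\textbf{Setting up the Jacobian row.} First I would differentiate
\[
f^{(k)}_{\vtheta_0^{(l:l+1)}}(\vx_l) \;=\; \sum_{j=1}^{n_l} W^{(l+1)}_{kj}\,\relu(x_{l,j}) + b^{(l+1)}_k
\]
coordinatewise. Using $\relu'(t) = \mathds{1}(t>0)$, the $j$-th entry of the $k$-th Jacobian row equals $W^{(l+1)}_{kj}\,\mathds{1}(x_{l,j}>0)$; reading this as the $k$-th row of $\mW^{(l+1)}$ with columns masked by $\mathds{1}(\vx_l>0)$ is exactly the equality asserted in \eqref{eq:he_jaco_condition}, and it gives
\[
\bigl\|J^{(k)}_{\vtheta_0^{(l:l+1)}}(\vx_l)\bigr\|_F^2 \;=\; \sum_{j=1}^{n_l}\bigl(W^{(l+1)}_{kj}\bigr)^2\,\mathds{1}(x_{l,j}>0).
\]

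\textbf{Mean and concentration.} Symmetry of $x_{l,j}$ at $0$ (with no atom at $0$) gives $\sP(x_{l,j}>0)=\tfrac{1}{2}$. Because $\mW^{(l+1)}$ is sampled independently of $\vx_l$, and because the entries of $\vx_l$ are mutually independent, the $n_l$ summands above are independent, and taking expectations termwise yields
\[
\E\bigl[\|J^{(k)}\|_F^2\bigr] \;=\; \sum_{j=1}^{n_l} \sigma_l^2 \cdot \tfrac{1}{2} \;=\; \tfrac{n_l\,\sigma_l^2}{2},
\]
which is exactly $1$ when $\sigma_l^2 = 2/n_l$. Each summand has variance of order $\sigma_l^4 = O(n_l^{-2})$, so the sum has variance $O(1/n_l) \to 0$; Chebyshev's inequality then gives $\|J^{(k)}\|_F^2 \to 1$ in probability as $n_l \to \infty$, and the continuous map $\sqrt{\cdot}$ transfers this to $\|J^{(k)}\|_F \to 1$. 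This is the sense in which the $\approx$ in \eqref{eq:he_jaco_condition} should be read.

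\textbf{Main obstacle.} Everything above is routine once the Jacobian row has been written down; the only delicate point is ensuring that the independence structure is exactly what is claimed. Specifically, one must invoke that $\mW^{(l+1)}$ is independent of $\vx_l$ (true at initialization, since $\vx_l$ depends only on the input and on earlier-layer weights) and, for fixed $k$, that the pairs $\bigl((W^{(l+1)}_{kj})^2,\,\mathds{1}(x_{l,j}>0)\bigr)_{j=1}^{n_l}$ are mutually independent—this is precisely where the hypothesis of mutual independence of the coordinates of $\vx_l$ enters. Without that hypothesis the summands could be correlated and Chebyshev would have to be replaced by a more careful concentration bound.
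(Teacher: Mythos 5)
Your proof is correct and takes essentially the same route as the paper: write the $k$-th Jacobian row as $W^{(l+1)}_{k\cdot}\,\mathds{1}(\vx_l>0)$, observe that the squared entries $(W^{(l+1)}_{kj})^2\,\mathds{1}(x_{l,j}>0)$ are independent with mean $\sigma_l^2/2$, and apply a law of large numbers to conclude $\|J^{(k)}\|_F^2\approx n_l\sigma_l^2/2=1$ under the He scaling. The only cosmetic difference is that the paper invokes Kolmogorov's strong law of large numbers to justify the $\approx$, whereas you track the variance explicitly and use Chebyshev to get convergence in probability; both deliver the same conclusion, and your parenthetical note that symmetry gives $\sP(x_{l,j}>0)=1/2$ only absent an atom at zero is a small hypothesis the paper implicitly assumes without stating.
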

In order to prevent input-output detachment, we thus 
introduce an additional regularization term:
\begin{align}
\label{eq:ignore_loss}
    \loss^{iod}(\vtheta_0;p(\vx))
    =
    \E_{\vx\sim p(\vx)}\left[\frac{1}{d}\sum_{i=1}^d \left\{\max_{l\in\{0,1,\cdots,L-1\}}\left(1-\left\|J_{\vtheta_0^{(l:L)}}^{(i)}(\vx_l)\right\|_F\right)^2\right\}\right],
\end{align}
where $\vx_l$ is a vector of pre-activated neurons at the $l$-th layer and $\vx_0$ is an input vector. By minimizing \eqref{eq:ignore_loss} with respect to $\vtheta_0$, we prevent $\vtheta_0$ from being constant, and consequently all nearby models as well, which we demonstrate empirically in \S\ref{a_sec:nearby_jaco_exp}. 

\subsection{Hyperparameters and our recommendation}
\label{sec:final_loss}

We designed three loss functions to find a good initial parameter configuration $\vtheta_0^*$ for $d$-way classification, using only unlabelled examples; i) $\loss^{uni}(\vtheta_0;\mSigma,\Delta^{d-1},\gamma)$ in \S\ref{sec:uniform}; ii) $\loss^{sd}(\vtheta_0;\mSigma,d)$ in \S\ref{sec:under-class}; iii) $\loss^{iod}(\vtheta_0)$ in \S\ref{sec:input_ignoring}.
$\loss^{uni}(\vtheta_0;\mSigma,\Delta^{d-1},\gamma)$ makes our model predictions be evenly spread over $\Delta^{d-1}$ centered on $\vtheta_0$. $\loss^{sd}(\vtheta_0;\mSigma,d)$ encourages the neighborhood of $\vtheta_0$ to have solutions specialized for $d$-way classification by preventing {\it degenerate softmax}. $\loss^{iod}(\vtheta_0)$ avoids the issue of  {\it input-output detachment}. We additively combine all these to form the final loss function:
\begin{align}
\label{eq:final_loss}
    \loss(\vtheta_0;\mSigma,\Delta^{d-1},\gamma, p(\vx),\lambda,\xi)=&\loss^{uni}(\vtheta_0;\mSigma,\Delta^{d-1},\gamma,p(\vx)) 
    \\
    & + \lambda\loss^{sd}(\vtheta_0;\mSigma,d,p(\vx))
    \nonumber
    \\
    &+\xi\loss^{iod}(\vtheta_0;p(\vx)).
    \nonumber 
\end{align}
In \S\ref{a_sec:need_all_exp}, we empirically present that $\loss^{sd}$ and $\loss^{iod}$ indeed prevent the degenerate softmax and the input-output detachment, and all these three loss functions in \eqref{eq:final_loss} are necessary to find a good initial parameter configuration. 
In the rest of this section, we provide guidelines on how to choose some of the hyperparameters.

We select the bandwidth of MMD in $\loss^{uni}$, $\gamma$, based on the median heuristic \citep{smola1998learning}. It uses the median of all pairwise distances for the Gaussian kernel in \eqref{eq:mmd_normal_nbd}. This technique is commonly used in many unsupervised learning based on the Gaussian kernel \citet{garreau2017large} such as kernel CCA \citep{bach2002kernel} and kernel two-sample test \citep{gretton2012kernel}. For more detailed description of the median heuristic in our experiments, see \S\ref{a_sec:med_heuristic}.  

For $\mSigma=\diag(\sigma_1^2,\sigma_2^2,\cdots,\sigma_m^2$) of both $\loss^{uni}$ and $\loss^{sd}$, each $\sigma_i^2$ corresponding to $\theta_{0,i}$ is set based on the number of neurons connected to $\theta_{0,i}$. For instance, if $\theta_{0,i}$ is the entry of either $\mW\in\R^{n_{out}\times n_{in}}$ or $\vb\in\R^{n_{out}}$ (i.e., a parameter in a fully-connected layer), we set $\sigma_i$ to $\sqrt{s^2/n_{in}}$ for $\mW$ and $\sqrt{s^2/n_{out}}$ for $\vb$ where $s$ is a hyperparameter shared across all $i$'s. 
For all the experiments in \S\ref{sec:main_exp}, we set $s=\sqrt{0.5}$, based on the preliminary experiments in \S\ref{a_sec:perturbation_std}. 


In the cases of $\lambda$ and $\xi$, we mainly focus on selecting $\lambda$ while fixing $\xi$ to $1$, because these two loss functions, $\loss^{uni}$ and $\loss^{iod}$, are intertwined.
We use $\lambda=0.4$ for all the experiments in \S\ref{sec:main_exp}. With $\lambda=0.4$, we observed in the preliminary experiments that both $\loss^{uni}$ and $\loss^{sd}$ decrease. See \S\ref{a_sec:lambda_underclass}
for more details. 


\section{Experimental Settings}
\label{sec:exp_setup}

To evaluate our algorithm, we fine-tune deep neural networks on the various binary downstream tasks synthetically created out of existing dataset. Here, we describe the experimental setup.

\paragraph{Datasets and tasks}

We derive binary tasks from MNIST \citep{lecun1998gradient} 
, using the original labels. For example,
we can create a binary classification problem, distinguishing odd and even numbers from MNIST which originally has 10 classes (0-9 digits). In this way, we can create $2^{10}-2$ tasks from
MNIST
. After we define how to convert the original labels to either 0 or 1, we randomly select $N$ (for training) + $0.2N$ (for validation) instances, which allows us to test the impact of the size of labelled set.  
We standardize each image to have zero-mean and unit variance across all the examples. We do not use any data augmentation.

\paragraph{Models}

We train a multi-layer perceptron with fully-connected layers, $\fcn$, on MNIST
. $\fcn$ has three hidden layers with ReLU \citep{nair2010rectified} nonlinearity. 
$+\bn$ refers to the addition of batch normalization \citep{ioffe2015batch} to all hidden layers before ReLU. Additional details about the network architectures  are included in \S\ref{a_sec:model_archi}.

\paragraph{Baselines}

In order to assess the effectiveness of the proposed approach, we compare it against more conventional approaches to initialization. First, we compare our approach against data-agnostic initialization schemes, including Xavier initialization \citep{glorot2010understanding} and He initialization \citep{he2015delving}.
We also compare it to \textit{R.label} which refers to a data-dependent initialization scheme proposed by \citet{pondenkandath2018leveraging}. In the case of R.label, we randomly assign labels to the examples in  each mini-batch and minimize the cross entropy loss. Both our initial parameter configuration and R.label's initial parameter configuration are pre-trained on the same number of unlabelled examples for the same maximum number of epochs. For each pre-training run, we choose the parameter configuration based on the pre-training loss.
See \S\ref{a_sec:pretrain} 
for more details about the baselines and our pre-training setup.

Orthogonal to these initialization schemes, we also test adding batch normalization to these baseline approaches. It has been observed by some that batch normalization makes learning less sensitive to initialization \citep{ioffe2015batch}. 

\paragraph{Training and evaluation} 

For each initialization scheme, we fine-tune the network by minimizing the cross entropy loss, using Adam \citep{kingma2014adam} with a fixed learning rate of $10^{-3}$ and momentum parameters set to $(\beta_1,\beta_2)=(0.9,0.999)$. We use mini-batches of size 50 and train the network for up to 10 epochs without any regularization. For each binary task, we monitor the validation loss over the epochs and calculate the test accuracy (\%) on 10,000 test examples when the validation loss is at its minimum. We then report the mean and standard deviation of the test accuracy (\%) across 20 random binary tasks. We repeat this whole set of experiments four times, for each setup.

\section{Results}
\label{sec:main_exp}

\begin{table}[t]
\caption{We present the average ($\pm$stdev) 
test scores on MNIST across four random experiments by varying the number of labelled examples ($10N$ for training and $2N$ for validation). We denote the random label pre-training by \textit{R.label}. \textbf{Bold} marks the best score within each column. For all $N$, our initialization approximates various tasks better than the others do. Especially, when the number of labelled examples is small, the improvement is significant. Although both R.label and our initialization use 60,000 unlabelled data, our pre-training is superior to R.label. The positive effect of batch normalization ($+\bn$) can be observed with $N=40$, but its effect does not match that of our approach. Compared to $\fcn$ trained from scratch, we observe that $+\bn$ negatively impacts on the test score when the number of labelled instances is small ($N=5$) while our initialization improves the test score regardless of $N$.
} 
\label{tab:mnist_binary}
\begin{center}
    \begin{tabular}{ccc|cccc}
    \hline
    \multicolumn{1}{c}{\bf Model}&\multicolumn{1}{c}{\bf Init}& \multicolumn{1}{c|}{\bf Pre-trained} &\multicolumn{1}{c}{\bf N=5} &\multicolumn{1}{c}{\bf N=10} &\multicolumn{1}{c}{\bf N=20} &\multicolumn{1}{c}{\bf N=40}
    \\ 
    \hline 
    \hline
    \fcn&Xavier&Ours&\textbf{82.42}$\pm$0.72&85.98$\pm$0.65&\textbf{90.07}$\pm$0.17&92.48$\pm$0.57
    \\
    \fcn&Xavier&-&79.63$\pm$0.78&83.70$\pm$0.59&87.54$\pm$0.67&90.91$\pm$0.53
    \\
    \fcn&Xavier&R.label&76.81$\pm$2.13&83.34$\pm$0.79&87.53$\pm$0.91&90.88$\pm$0.52
    \\
    \fcn+\bn&Xavier&-&77.09$\pm$1.22&83.50$\pm$0.44&88.00$\pm$0.60&91.48$\pm$0.53
    \\
    \fcn+\bn&Xavier&R.label&78.87$\pm$1.75&84.38$\pm$0.97&88.71$\pm$0.53&91.57$\pm$0.59
    \\
    \hline
    \fcn&He&Ours&82.27$\pm$0.78&\textbf{86.46}$\pm$0.37&89.69$\pm$0.28&\textbf{92.61}$\pm$0.51
    \\
    \fcn&He&-&79.17$\pm$1.21&83.41$\pm$0.92&87.96$\pm$0.64&91.34$\pm$0.37
    \\
    \fcn&He&R.label&77.41$\pm$2.09&83.52$\pm$0.77&87.31$\pm$0.68&90.66$\pm$0.41
    \\
    \fcn+\bn&He&-&76.89$\pm$1.48&83.01$\pm$0.98&88.01$\pm$0.66&91.55$\pm$0.57
    \\
    \fcn+\bn&He&R.label&78.82$\pm$0.78&85.33$\pm$0.62&89.15$\pm$0.68&92.14$\pm$0.67
    \\ 
    \hline
\end{tabular}
\end{center}
\end{table}

\begin{table}[t]
\caption{We additionally demonstrate the standard deviation of test scores across 20 binary random tasks derived from MNIST by varying the number of labelled examples ($10N$ for training and $2N$ for validation). This metric measures the ability to solve most of tasks well (lower is better). We perform four random runs and report the average standard deviation. Here, ($\pm$stdev) means the standard deviation across four random experiments. We denote the random label pre-training by \textit{R.label}. \textbf{Bold} marks the best score within each column. Similar to Table \ref{tab:mnist_binary}, our initialization solves most of tasks well even if there are a small number of labelled examples. Both $+\bn$ and R.label can hurts the performance to approximate various tasks when the number of labelled instances is small (N=5).
} 

\label{tab:mnist_binary_std}
\begin{center}
    \begin{tabular}{ccc|cccc}
    \hline
    \multicolumn{1}{c}{\bf Model}&\multicolumn{1}{c}{\bf Init}& \multicolumn{1}{c|}{\bf Pre-trained} &\multicolumn{1}{c}{\bf N=5} &\multicolumn{1}{c}{\bf N=10} &\multicolumn{1}{c}{\bf N=20} &\multicolumn{1}{c}{\bf N=40}
    \\
    \hline 
    \hline
    \fcn&Xavier&Ours&\textbf{4.76}$\pm$0.88 & 4.54$\pm$0.52 & \textbf{3.01}$\pm$0.71 & 2.26$\pm$0.40
    \\
    \fcn&Xavier&-&6.62$\pm$1.29&5.55$\pm$0.62&3.54$\pm$0.27&2.65$\pm$0.53
    \\
    \fcn&Xavier&R.label&6.08$\pm$0.92&5.02$\pm$1.16&3.82$\pm$0.23&2.78$\pm$0.49
    \\
    \fcn+\bn&Xavier&-&7.47$\pm$1.70&5.53$\pm$0.69&3.72$\pm$0.78&2.72$\pm$0.32
    \\
    \fcn+\bn&Xavier&R.label&6.62$\pm$1.50&5.44$\pm$0.33&3.20$\pm$0.41&2.40$\pm$0.37
    \\
    \hline
    \fcn&He&Ours&5.26$\pm$0.87&\textbf{4.04}$\pm$0.80 & 3.25$\pm$0.42 & \textbf{2.16}$\pm$0.40
    \\
    \fcn&He&-&5.74$\pm$0.81&5.32$\pm$0.45&3.31$\pm$0.48&2.47$\pm$0.31
    \\
    \fcn&He&R.label&6.37$\pm$1.10&4.84$\pm$1.02&3.98$\pm$0.44&3.03$\pm$0.85
    \\
    \fcn+\bn&He&-&7.52$\pm$0.76&6.50$\pm$1.76&3.59$\pm$0.80&2.74$\pm$0.41
    \\
    \fcn+\bn&He&R.label&7.33$\pm$1.10&4.95$\pm$1.08&3.18$\pm$0.55&2.30$\pm$0.28
    \\ 
    \hline
\end{tabular}
\end{center}
\end{table}


Table~\ref{tab:mnist_binary} shows that the average test scores on 20 random binary tasks across 4 random runs. The 20 binary tasks for each run is the same regardless of model, initialization, and pre-training. Pre-training $\fcn$ with 60,000 unlabelled examples by our algorithm improves average test accuracy across 20 random tasks compared to that of training $\fcn$ from scratch, and this improvement is greater than the number of labelled instances is small. Furthermore, our test scores are better than all the schemes applied to $\fcn+\bn$ which has more parameters than $\fcn$. Both R.label and $+\bn$ bring the positive effect when the number of labelled examples is sufficient (N=40). However,  for $N=5$, both hurt the test performance of the randomly initialized plain network.

We also present the standard deviation of test scores across 20 binary random tasks created from MNIST in Table \ref{tab:mnist_binary_std}. Similar to Table \ref{tab:mnist_binary}, our initialization improves the ability to solve most of downstream tasks, and this improvement is greater when the number of labelled instances is small. We also observe R.label and $+\bn$ can hurt this ability in terms of the standard deviation for $N=5$. 

\section{Conclusion}
\label{sec:conclusion}

In this paper we proposed a novel criterion for identifying good initialization of parameters in deep neural networks. This criterion looks at the distribution over models derived from parameter configurations in the vicinity of an initial parameter configuration. If this distribution is close to a uniform distribution, the initial parameters are considered good, since we can easily reach any possible solution rapidly from there on. 

We then derived an unsupervised initialization algorithm based on this criterion. In addition to maximizing this uniformity,
our algorithm
prevents two degenerate cases; (1) degenerate softmax and (2) input-output detachment. 
Our experiments reveal that 
the model initialized by our algorithm can be trained better than the one trained from scratch, in terms of average test accuracy across a diverse set of tasks. This improvement was found to be comparable to or better than random label pre-training \citep{pondenkandath2018leveraging, maennel2020neural} and batch normalization \citep{ioffe2015batch} combined with typical initialization strategies. 

The effectiveness of the proposed approach leaves us with one puzzling question. The proposed algorithm does not take into account the use of gradient-based optimization, unlike model-agnostic meta-learning \citep{finn2017model}, and it could still find initial parameters that were amenable to gradient-based fine-tuning. This raises a question on the relative importance between initialization and the choice of optimizer in deep learning. We leave this question for the future.




\subsubsection*{Acknowledgments}
This work was supported by 42dot, Hyundai Motor Company (under the project Uncertainty in Neural Sequence Modeling), Samsung Advanced Institute of Technology (under the project Next Generation Deep Learning: From Pattern Recognition to AI), and NSF Award 1922658 NRT-HDR: FUTURE Foundations, Translation, and Responsibility for Data Science. This work was supported in part through the NYU IT High Performance Computing resources, services, and staff expertise.

\bibliography{iclr2023_conference}
\bibliographystyle{iclr2023_conference}


\clearpage
\appendix
\setcounter{section}{0}
\renewcommand\thesection{\Alph{section}}
\renewcommand\thesection{\Alph{section}}

\section{Appendix}

\subsection{Proofs for Theorem~\ref{thm:prob_out_of_nbd}}
\label{a_sec:thm1_proof}
To prove Theorem~\ref{thm:prob_out_of_nbd}, we introduce \textit{sub-exponential} random variables defined as follows:

\begin{definition}
\label{a_def:subexp}
    (Definition 2.7 in \citet{wainwright2019high}) A random variable $\rx$ is sub-exponential with $(\nu^2,\alpha)$, if
    \[
        \log \E\left[e^{\lambda(\rx-\E[\rx])}\right]\leq \frac{\lambda^2\nu^2}{2},~~\forall |\lambda|<\frac{1}{\alpha}.
    \]
    Furthermore, $\subexp(\nu^2,\alpha)$ refers to the collection of all sub-exponential random variables with $(\nu^2,\alpha)$. 
\end{definition}

For example, the chi-squared distribution with 1 degree of freedom, $\chi_1^2$, is sub-exponential with $(4,4)$ as stated in Lemma~\ref{a_lem:chi_is_subexp}:

\begin{lemma}
\label{a_lem:chi_is_subexp}
    If $\rx\sim\chi^2_1$, then $\rx\in\subexp(4,4)$.
\end{lemma}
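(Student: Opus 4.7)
The plan is to reduce the claim to an inequality about a single smooth function of one real variable and verify it by a Taylor-series bound. Since $\rx \sim \chi_1^2$ has mean $1$ and moment generating function $\E[e^{\lambda \rx}] = (1-2\lambda)^{-1/2}$ on $\lambda < 1/2$, the centered cumulant generating function is
\begin{equation*}
\psi(\lambda) \;:=\; \log\E\bigl[e^{\lambda(\rx - \E[\rx])}\bigr] \;=\; -\lambda - \tfrac{1}{2}\log(1-2\lambda),
\end{equation*}
defined on $\lambda < 1/2$. In view of Definition~\ref{a_def:subexp} with $(\nu^2,\alpha) = (4,4)$, the claim reduces to showing $\psi(\lambda) \le 2\lambda^2$ for every $|\lambda| < 1/4$.

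The approach is to substitute $u = 2\lambda$ (so $|u|<1/2$) and rewrite the desired inequality as $-u - \log(1-u) \le u^2$. Expanding $-\log(1-u) = \sum_{k=1}^{\infty} u^k/k$ (convergent on $|u|<1$) gives
\begin{equation*}
-u - \log(1-u) \;=\; \sum_{k=2}^{\infty} \frac{u^k}{k}.
\end{equation*}
I will bound this series on $|u|<1/2$ by two separate cases.

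For $u \in [0, 1/2)$, use $1/k \le 1/2$ for $k \ge 2$ and the geometric bound $\sum_{k=2}^\infty u^k = u^2/(1-u) \le 2u^2$, which gives $\sum_{k=2}^\infty u^k/k \le u^2$. For $u \in (-1/2, 0)$, set $v = -u \in (0, 1/2)$; the series becomes the alternating series $\sum_{k=2}^\infty (-1)^k v^k/k$, whose terms have strictly decreasing absolute values (since $v\cdot k/(k+1) < 1$), so by the alternating-series estimate the sum is bounded by its first term $v^2/2 \le v^2 = u^2$. Combining the two cases yields $\psi(\lambda) \le 2\lambda^2$ for $|\lambda|<1/4$, as required.

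The main obstacle, though minor, is the sign-handling in the second case: because the claim is a one-sided inequality on $\psi$ (not on $|\psi|$), one cannot simply bound term-wise in absolute value without losing the sign structure. The alternating-series argument is the cleanest route; an alternative would be to observe that $\psi''(\lambda) = 2/(1-2\lambda)^2$ is convex and integrate twice from $0$, but this gives a non-uniform bound that still needs a case split to reach the constant $2$. I expect the Taylor/series route above to be the shortest self-contained proof.
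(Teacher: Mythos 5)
Your proof is correct, and it follows the same Taylor-series route as the paper, but it is actually \emph{more careful} on a point the paper glosses over. After pulling out $2\lambda^2$, the paper bounds $\sum_{n\ge 0}\frac{2}{n+2}(2\lambda)^n$ term-wise by $\sum_{n\ge 0}(2\lambda)^n$, justifying it solely by $\frac{2}{n+2}\le 1$. That term-wise inequality only goes the right direction when $(2\lambda)^n\ge 0$, i.e.\ when $\lambda\ge 0$; for $\lambda<0$ the odd-index terms are negative and the inequality flips (e.g.\ at $2\lambda=-0.1$ one has $\sum_{n}\tfrac{2}{n+2}(2\lambda)^n\approx 0.938$ while $\tfrac{1}{1-2\lambda}\approx 0.909$). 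The final bound $\psi(\lambda)\le 2\lambda^2$ nevertheless does hold on the whole interval $|\lambda|<1/4$, but the paper's argument, as written, only establishes it for $\lambda\in[0,1/4)$. Your case split on the sign of $u=2\lambda$, with the geometric bound for $u\ge 0$ and the alternating-series estimate (first term dominates, since the terms $v^k/k$ are strictly decreasing) for $u<0$, correctly covers both signs and is the cleaner, fully rigorous version of the same idea. The only stylistic note: it is worth stating explicitly that the alternating-series bound gives $0\le \sum_{k\ge 2}(-1)^k v^k/k\le v^2/2$ because the leading term is positive and the magnitudes decrease monotonically, which is what licenses dropping the factor $1/2$ to reach $v^2=u^2$.
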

\begin{proof}
    The moment generating function of $\rx\sim\chi_1^2$ and its expectation are well-known as
    \[
        \E[e^{\lambda\rx}] = \frac{1}{\sqrt{1-2\lambda}},~~\forall \lambda<\frac{1}{2},
    \]
    and $\E[\rx]=1$, respectively \citep{casella2021statistical}. Hence, we have
    \begin{align}
    \label{a_eq:chi_is_subexp}
        \log \E\left[e^{\lambda(\rx-\E[\rx])}\right] &= \log \frac{e^{-\lambda}}{\sqrt{1-2\lambda}}
        \\
        &=-\lambda-\frac{1}{2}\log(1-2\lambda)
        \nonumber
        \\
        &=-\lambda + \frac{1}{2}\sum_{n=1}^\infty \frac{(2\lambda)^n}{n}~~\left(\because\log(1-t)=-\sum_{n=1}^{\infty} \frac{t^n}{n},~~\forall |t|<1\right)
        \nonumber
        \\
        &=\frac{1}{2}\sum_{n=2}^\infty \frac{(2\lambda)^n}{n}
        \nonumber
        \\
        &=\frac{2\lambda^2}{2}\sum_{n=0}^\infty \frac{2}{n+2}(2\lambda)^n
        \nonumber
        \\
        &\leq \frac{2\lambda^2}{2}\sum_{n=0}^\infty (2\lambda)^n~~\left(\because\frac{2}{n+2}\leq 1,~~\forall n\geq0\right)
        \nonumber
        \\
        &=\frac{2\lambda^2}{2}\frac{1}{1-2\lambda},~~\forall|\lambda|<\frac{1}{2}~~\left(\because \frac{1}{1-t}=\sum_{n=0}^{\infty} t^n,~~\forall |t|<1\right).
        \nonumber
    \end{align}
    Since $\frac{1}{1-2\lambda}\leq 2$ for any $|\lambda|<\frac{1}{4}$, we have
    \[
        \log \E\left[e^{\lambda(\rx-\E[\rx])}\right]\leq \frac{4\lambda^2}{2},~~\forall |\lambda|<\frac{1}{4},
    \]
    form \eqref{a_eq:chi_is_subexp}. That is, $\rx\in\subexp(4,4)$ by Definition~\ref{a_def:subexp}.
\end{proof}

By combining Lemma~\ref{a_lem:scalar_mul_subexp} with Lemma~\ref{a_lem:ind_sum_subexp}, we show that linear combinations of independent sub-exponential random variables are also sub-exponential random variables. 

\begin{lemma}
\label{a_lem:scalar_mul_subexp}
    If $\rx\in\subexp(\nu^2, \alpha)$, then $\sigma^2\rx\in\subexp(\sigma^4\nu^2, \sigma^2\alpha)$.
\end{lemma}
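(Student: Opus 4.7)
The plan is to reduce the claim to a direct change of variables in the defining moment generating function inequality. By Definition~\ref{a_def:subexp}, the hypothesis $\rx \in \subexp(\nu^2, \alpha)$ says
\[
    \log \E\!\left[e^{\lambda(\rx - \E[\rx])}\right] \leq \frac{\lambda^2 \nu^2}{2}, \qquad \forall\, |\lambda| < \tfrac{1}{\alpha},
\]
and we need to verify the analogous bound for $\sigma^2 \rx$ with parameters $(\sigma^4 \nu^2, \sigma^2 \alpha)$.

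First, I would handle the trivial case $\sigma = 0$ separately: then $\sigma^2 \rx = 0$ is a degenerate constant and its centered MGF is identically $1$, so any pair $(\nu'^2, \alpha')$ works and in particular $(0, 0)$ (interpreted with the convention that the condition $|\mu| < 1/\alpha'$ becomes vacuous or covers all $\mu$). For $\sigma \neq 0$, I would fix an arbitrary $\mu$ with $|\mu| < 1/(\sigma^2 \alpha)$ and set $\lambda := \mu \sigma^2$, which then satisfies $|\lambda| < 1/\alpha$. Using linearity of expectation, $\sigma^2 \rx - \E[\sigma^2 \rx] = \sigma^2(\rx - \E[\rx])$, so
\[
    \log \E\!\left[e^{\mu(\sigma^2 \rx - \E[\sigma^2 \rx])}\right]
    = \log \E\!\left[e^{\lambda(\rx - \E[\rx])}\right]
    \leq \frac{\lambda^2 \nu^2}{2}
    = \frac{\mu^2 \sigma^4 \nu^2}{2},
\]
where the inequality invokes the hypothesis in the admissible range. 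Since $\mu$ was arbitrary with $|\mu| < 1/(\sigma^2 \alpha)$, this is exactly the definition of $\sigma^2 \rx \in \subexp(\sigma^4 \nu^2, \sigma^2 \alpha)$.

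There is no real obstacle here; the only point of care is tracking the admissible range of the dual variable under the substitution $\lambda = \mu \sigma^2$, which is exactly why the new scale parameter is $\sigma^2 \alpha$ rather than $\alpha$. The variance-like parameter then scales as $\sigma^4 \nu^2$ because the exponent on the right-hand side picks up a factor of $\sigma^4$ through $\lambda^2 = \mu^2 \sigma^4$.
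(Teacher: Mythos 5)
Your proof is correct and follows essentially the same route as the paper's: substitute $\lambda = \mu\sigma^2$ in the defining MGF inequality, observe that the admissible range transforms to $|\mu| < 1/(\sigma^2\alpha)$, and read off the new parameters $(\sigma^4\nu^2, \sigma^2\alpha)$. Your extra care for the degenerate case $\sigma = 0$ is a minor refinement the paper omits, but otherwise the two arguments coincide.
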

\begin{proof}
    From $\rx\in\subexp(\nu^2,\alpha)$, we have
    \begin{align}
    \label{a_eq:scalar_mul_subexp}
        \log \E\left[e^{\lambda(\rx-\E[\rx])}\right]\leq \frac{\lambda^2\nu^2}{2},~~\forall |\lambda|<\frac{1}{\alpha},
    \end{align}
    by Definition~\ref{a_def:subexp}. By substituing $\sigma^2\lambda$ for $\lambda$ in \eqref{a_eq:chi_is_subexp}, we simply obtain
    \begin{align*}
    \label{a_eq:scalar_mul_subexp}
        \log \E\left[e^{\sigma^2\lambda(\rx-\E[\rx])}\right]&\leq \frac{(\sigma^2\lambda)^2\nu^2}{2},~~\forall |\sigma^2\lambda|<\frac{1}{\alpha},
    \end{align*}
    and this is equivalent to 
    \begin{align*}
        \log \E\left[e^{\lambda(\sigma^2\rx-\E[\sigma^2\rx])}\right]&\leq \frac{\lambda^2(\sigma^4\nu^2)}{2},~~\forall |\lambda|<\frac{1}{\sigma^2\alpha}.
    \end{align*}
    Therefore, $\sigma^2\rx\in\subexp(\sigma^4\nu^2,\sigma^2\alpha)$.
\end{proof}

\begin{lemma}
\label{a_lem:ind_sum_subexp}
    Suppose that $\rx_i\in\subexp(\nu_i^2, \alpha_i)$ for all $i=1,2,\cdots,m$. If $\rx_1,\rx_2,\cdots,\rx_m$ are mutually independent, then 
    \[
        \sum_{i=1}^m \rx_i\in\subexp\left(\sum_{i=1}^m \nu_i^2, \max_{i=1,2,\cdots,m}\alpha_i\right).
    \]
\end{lemma}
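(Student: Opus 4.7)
The plan is a direct verification against Definition~\ref{a_def:subexp}. Writing $S = \sum_{i=1}^m \rx_i$, I need to produce a bound of the form $\log \E[e^{\lambda(S - \E[S])}] \leq \lambda^2 (\sum_i \nu_i^2)/2$ that is valid on the interval $|\lambda| < 1/\max_i \alpha_i$, and independence together with the individual sub-exponential bounds is clearly what makes this work.

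First I would linearize inside the exponent: by linearity of expectation, $S - \E[S] = \sum_{i=1}^m (\rx_i - \E[\rx_i])$, so
\begin{align*}
    \E\!\left[e^{\lambda(S - \E[S])}\right] = \E\!\left[\prod_{i=1}^m e^{\lambda(\rx_i - \E[\rx_i])}\right].
\end{align*}
Next I invoke mutual independence of the $\rx_i$'s, which makes the functions $e^{\lambda(\rx_i - \E[\rx_i])}$ independent as well, and factor the expectation as a product:
\begin{align*}
    \E\!\left[e^{\lambda(S - \E[S])}\right] = \prod_{i=1}^m \E\!\left[e^{\lambda(\rx_i - \E[\rx_i])}\right].
\end{align*}
Taking logarithms turns this into a sum, $\log \E[e^{\lambda(S - \E[S])}] = \sum_{i=1}^m \log \E[e^{\lambda(\rx_i - \E[\rx_i])}]$.

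The only slightly delicate point, and therefore the main thing to be careful about, is the domain of validity. The sub-exponential bound for $\rx_i$ applies only when $|\lambda| < 1/\alpha_i$, so to apply all $m$ bounds simultaneously I need $|\lambda| < \min_i (1/\alpha_i) = 1/\max_i \alpha_i$. Once I restrict to this range, each term satisfies $\log \E[e^{\lambda(\rx_i - \E[\rx_i])}] \leq \lambda^2 \nu_i^2 / 2$ by Definition~\ref{a_def:subexp}, and summing over $i$ gives
\begin{align*}
    \log \E\!\left[e^{\lambda(S - \E[S])}\right] \leq \sum_{i=1}^m \frac{\lambda^2 \nu_i^2}{2} = \frac{\lambda^2 \left(\sum_{i=1}^m \nu_i^2\right)}{2},
\end{align*}
valid for all $|\lambda| < 1/\max_i \alpha_i$. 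Matching this against Definition~\ref{a_def:subexp} yields $S \in \subexp(\sum_i \nu_i^2, \max_i \alpha_i)$ as claimed. No further inequalities or truncation arguments are needed; the proof is essentially mechanical once the domain issue has been handled.
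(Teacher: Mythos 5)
Your proof is correct and follows essentially the same route as the paper's: factor the moment generating function using independence, take logs to convert the product to a sum, apply each individual sub-exponential bound, and observe that the valid domain is the intersection $|\lambda| < 1/\max_i \alpha_i$. Nothing is missing.
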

\begin{proof}
    Since $\rx_i\in\subexp(\nu_i^2,\alpha_i)$, we have
    \begin{align*}
        \log \E\left[e^{\lambda(\rx-\E[\rx])}\right]\leq \frac{\lambda^2\nu_i^2}{2},~~\forall |\lambda|<\frac{1}{\alpha_i}.
    \end{align*}
    by Definition~\ref{a_def:subexp}. By using the independence of $\rx_i$'s, we obtain
    \begin{align}\label{a_eq:ind_sum_subexp_nu}
        \log \E\left[e^{\lambda(\sum_{i=1}^m\rx_i-\E[\sum_{i=1}^m\rx_i])}\right]&=\log \E\left[e^{\sum_{i=1}^m(\lambda\rx_i-E[\rx_i])}\right]
        \\
        &=\log \E\left[\prod_{i=1}^m e^{\lambda\rx_i-E[\rx_i]}\right]
        \nonumber
        \\
        &=\log \prod_{i=1}^m \E\left[e^{\lambda\rx_i-E[\rx_i]}\right]
        \nonumber
        \\
        &=\sum_{i=1}^m \log \E\left[e^{\lambda\rx_i-E[\rx_i]}\right]
        \nonumber
        \\
        &\leq \frac{\lambda^2}{2}\sum_{i=1}^m\nu_i^2,~~\forall |\lambda|\in\bigcap_{i=1}^m\left\{\lambda: |\lambda|<\frac{1}{\alpha_i}\right\}.
        \nonumber
    \end{align}
    Note that 
    \begin{align}
    \label{a_eq:ind_sum_subexp_alpha}
        \left\{\lambda:|\lambda|< \frac{1}{\displaystyle\max_{i=1,2,\cdots,m} \alpha_i}\right\}=\bigcap_{i=1}^m\left\{\lambda: |\lambda|<\frac{1}{\alpha_i}\right\}.
    \end{align}
    By \eqref{a_eq:ind_sum_subexp_nu} and \eqref{a_eq:ind_sum_subexp_alpha}, we have 
    \[
        \log \E\left[e^{\lambda(\sum_{i=1}^m\rx_i-\E[\sum_{i=1}^m\rx_i])}\right] \leq \frac{\lambda\sum_{i=1}^m\nu_i^2}{2},~~\forall |\lambda|<\frac{1}{\displaystyle\max_{i=1,2,\cdots,m}\alpha_i}.
    \]
    Therefore, 
    \[
        \sum_{i=1}^m \rx_i\in\subexp\left(\sum_{i=1}^m \nu_i^2, \max_{i=1,2,\cdots,m}\alpha_i\right).
    \]
\end{proof}

The following Proposition~\ref{a_prop:subexp_tail_bdd} shows a tail bound of a sub-exponential random variable, and this proposition is the key for the proof of Theorem~\ref{thm:prob_out_of_nbd}. 

\begin{proposition}
\label{a_prop:subexp_tail_bdd}
    (Proposition 2.9 in \citet{wainwright2019high}) If $\rx\in\subexp(\nu^2,\alpha)$, then
    \[
        \sP(\rx-\E[\rx]\geq t) \leq \exp\left(-\frac{1}{2}\min\left\{\frac{t^2}{\nu^2},\frac{t}{\alpha}\right\}\right),
    \]
    for all $t>0$.
\end{proposition}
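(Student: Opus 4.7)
The plan is to apply the Chernoff bounding technique together with the sub-exponential MGF hypothesis from Definition~\ref{a_def:subexp}, and then optimize over the free variable $\lambda$, splitting into two regimes according to how $t$ compares with the threshold $\nu^2/\alpha$. The overall strategy is completely standard for Bernstein-type tail bounds.

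First, for any $\lambda \in [0, 1/\alpha)$, Markov's inequality applied to the random variable $e^{\lambda(\rx - \E[\rx])}$ gives
\[
    \sP(\rx - \E[\rx] \geq t) = \sP\bigl(e^{\lambda(\rx - \E[\rx])} \geq e^{\lambda t}\bigr) \leq e^{-\lambda t}\,\E\bigl[e^{\lambda(\rx-\E[\rx])}\bigr] \leq \exp\!\left(-\lambda t + \tfrac{\lambda^2 \nu^2}{2}\right),
\]
where the last inequality uses $\rx \in \subexp(\nu^2,\alpha)$. The problem now reduces to minimizing $g(\lambda) = -\lambda t + \lambda^2 \nu^2/2$ over $\lambda \in [0, 1/\alpha)$. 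Setting $g'(\lambda) = 0$ yields an unconstrained minimizer $\lambda^* = t/\nu^2$, at which $g(\lambda^*) = -t^2/(2\nu^2)$.

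Second, I would split into two cases. If $t \leq \nu^2/\alpha$, then $\lambda^* = t/\nu^2 \leq 1/\alpha$ is admissible, so the Chernoff bound evaluates to $\exp(-t^2/(2\nu^2))$; in this regime one checks that $t^2/\nu^2 \leq t/\alpha$, so $\min\{t^2/\nu^2, t/\alpha\} = t^2/\nu^2$ and the stated bound is recovered. If instead $t > \nu^2/\alpha$, then the unconstrained minimizer lies outside the admissible range, and $g$ is strictly decreasing on $[0, 1/\alpha]$, so I would pick $\lambda$ as close to $1/\alpha$ as allowed (or take a limit $\lambda \nearrow 1/\alpha$ using continuity of the Chernoff bound in $\lambda$). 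Evaluating at the boundary gives $g(1/\alpha) = -t/\alpha + \nu^2/(2\alpha^2)$, and the hypothesis $t > \nu^2/\alpha$ implies $\nu^2/(2\alpha^2) < t/(2\alpha)$, so $g(1/\alpha) < -t/(2\alpha)$, yielding the bound $\exp(-t/(2\alpha))$. In this regime $\min\{t^2/\nu^2, t/\alpha\} = t/\alpha$, again matching the claim.

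The only real subtlety, and thus the main obstacle, is the boundary issue: Definition~\ref{a_def:subexp} supplies the MGF estimate only on the open interval $|\lambda| < 1/\alpha$, yet Case~2 wants to take $\lambda = 1/\alpha$. I would resolve this either by a continuity/limit argument, letting $\lambda \nearrow 1/\alpha$ in the Chernoff inequality (both sides are continuous in $\lambda$, and the MGF bound passes to the limit by monotone convergence / Fatou, or simply because the bound $\exp(-\lambda t + \lambda^2 \nu^2/2)$ is continuous), or by picking any $\lambda < 1/\alpha$ sufficiently close and absorbing the vanishing slack. Combining the two cases then yields the uniform bound $\exp\!\bigl(-\tfrac{1}{2}\min\{t^2/\nu^2,\, t/\alpha\}\bigr)$ for all $t > 0$, completing the proof.
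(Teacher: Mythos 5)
Your proof is correct and is essentially the standard Chernoff-bound argument that appears as the proof of Proposition 2.9 in Wainwright's book, which the paper cites without reproducing a proof. You correctly identify and handle the only real subtlety (the MGF estimate is given only on the open interval $|\lambda| < 1/\alpha$, so the constrained optimum in the regime $t > \nu^2/\alpha$ must be taken as a supremum/limit rather than an evaluation at $\lambda = 1/\alpha$); for completeness you might note that at the crossover point $t = \nu^2/\alpha$ the two expressions $t^2/\nu^2$ and $t/\alpha$ coincide and the same limiting argument covers it, so it does not matter which case absorbs that boundary value.
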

 
\begin{customthm}{\ref{thm:prob_out_of_nbd}}
    Let $\vtheta\sim\normal(\vtheta_0, \diag(\sigma_1^2,\sigma_2^2,\cdots,\sigma_m^2))$ and $\alpha_* =\max_{i=1,2,\cdots,m} \sigma_i^2$. If $r^2$ is greater than $m\alpha_*$, then we have
    \begin{align}
    \label{a_eq:prob_out_of_nbd}
        \sP\left(\|\vtheta-\vtheta_0\|\geq\ r \right) \leq \exp\left(-\frac{1}{8}\min\left\{\eta^2,m\eta\right\}\right),
    \end{align}
    where $\eta=\frac{r^2}{m\alpha_*}-1$.
\end{customthm}
\begin{proof}
    Since $\vtheta\sim\normal(\vtheta_0, \diag(\sigma_1^2,\sigma_2^2,\cdots,\sigma_m^2))$, $\vtheta-\vtheta_0$ follows $\normal(\vzero, \diag(\sigma_1^2,\sigma_2^2,\cdots,\sigma_m^2))$. We write $\vtheta-\vtheta_0$ as
    \[
        \vtheta-\vtheta_0 = (\sigma_1\rz_1, \sigma_2\rz_2,\cdots,\sigma_m\rz_m),
    \]
    where $\rz_i\sim\normal(0,1)$ for all $i=1,2,\cdots,m$ and $\rz_i$'s are mutually independent. Hence, we have
    \[
        \|\vtheta-\vtheta_0\|^2=\sum_{i=1}^{m}\sigma_i^2\rz_i^2,
    \]
    where $\rz_i^2\sim\chi_1^2$ for all $i=1,2,\cdots,m$ and $\rz_i^2$'s are mutually independent. From Lemma~\ref{a_lem:chi_is_subexp}, each $\rz_i^2$ is in $\subexp(4,4)$. Moreover, we have $\sigma_i^2 \rz_i^2\in\subexp(4\sigma_i^4, 4\sigma_i^2)$ by Lemma~\ref{a_lem:scalar_mul_subexp} and $\sum_{i=1}^m \sigma_i^2\rz_i^2\in\subexp(4\sum_{i=1}^m \sigma_i^4, 4\max_{i=1,2,\cdots,m}\sigma_i^2)$ by Lemma~\ref{a_lem:ind_sum_subexp}. We denote $\sum_{i=1}^m \sigma_i^4$ and  $\max_{i=1,2,\cdots,m}\sigma_i^2$ by $\nu_*^2$ and $\alpha_*$, respectively. Then, we have
    
    \[
        \|\vtheta-\vtheta_0\|^2\in\subexp(4\nu_*^2, 4\alpha_*),
    \]
    and
    \[
        \E[\|\vtheta-\vtheta_0\|^2]=\E\left[\sum_{i=1}^m\sigma_i^2\rz_i^2\right]=\sum_{i=1}^m\sigma_i^2\E[\rz_i^2]=\sum_{i=1}^m\sigma_i^2~~(\because \rz_i\sim\normal(0,1)).
    \]
    From Proposition~\ref{a_prop:subexp_tail_bdd}, we obtain a tail bound of $\|\vtheta-\vtheta_0\|^2$ as follows:
    \begin{align}\label{a_eq:prob_out_of_nbd_raw}
        \sP(\|\vtheta-\vtheta_0\|^2-\sigma_*^2\geq t) \leq \exp\left(-\frac{1}{2}\min\left\{\frac{t^2}{4\nu_*^2},\frac{t}{4\alpha_*}\right\}\right),
    \end{align}
    where $\sigma_*^2=\sum_{i=1}^m\sigma_i^2$  and $t>0$. Let $r^2=\sigma_*^2+t$. Then, we can rewirte \eqref{a_eq:prob_out_of_nbd_raw} as
    \begin{align}
    \label{a_eq:prob_out_of_nbd_bdd_raw}
        \sP(\|\vtheta-\vtheta_0\|^2 \geq r^2) \leq \exp\left(-\frac{1}{8}\min\left\{\frac{(r^2-\sigma_*^2)^2}{\nu_*^2},\frac{r^2-\sigma_*^2}{\alpha_*}\right\}\right).
    \end{align}
    Note that
    \begin{align}
    \label{a_eq:prob_out_of_nbd_bdd_nu}
        \nu_*^2 = \sum_{i=1}^m \sigma_i^4 \leq \left(\sum_{i=1}^m \sigma_i^2\right)^2=\sigma_*^4,
    \end{align}
    and
    \begin{align}
    \label{a_eq:prob_out_of_nbd_bdd_alpha}
        \sigma_*^2 = \sum_{i=1}^m \sigma_i^2 \leq m\alpha_*.
    \end{align}
    By \eqref{a_eq:prob_out_of_nbd_bdd_nu} and \eqref{a_eq:prob_out_of_nbd_bdd_alpha}, we have
    \begin{align}
    \label{a_eq:prob_out_of_nbd_bdd_nu_final}
        \frac{(r^2-\sigma_*^2)^2}{\nu_*^2} \geq \frac{(r^2-\sigma_*^2)^2}{\sigma_*^4}=\left(\frac{r^2}{\sigma_*^2}-1\right)^2, 
    \end{align}
    and
    \begin{align}
    \label{a_eq:prob_out_of_nbd_bdd_alpha_final}
        \frac{r^2-\sigma_*^2}{\alpha_*} \geq \frac{r^2-m\alpha_*}{\alpha_*}=\frac{r^2}{\alpha_*}-m.
    \end{align}
    Hence, we finally get
    \begin{align}
    \label{a_eq:prob_out_of_nbd_bdd_final}
        \sP(\|\vtheta-\vtheta_0\|^2 \geq r^2) &\leq \exp\left(-\frac{1}{8}\min\left\{\frac{(r^2-\sigma_*^2)^2}{\nu_*^2},\frac{r^2-\sigma_*^2}{\alpha_*}\right\}\right)
        \\
        &\leq \exp\left(-\frac{1}{8}\min\left\{\left(\frac{r^2}{\sigma_*^2}-1\right)^2,\frac{r^2}{\alpha_*}-m\right\}\right),
        \nonumber
    \end{align}
    from \eqref{a_eq:prob_out_of_nbd_bdd_raw} by using \eqref{a_eq:prob_out_of_nbd_bdd_nu_final} and \eqref{a_eq:prob_out_of_nbd_bdd_alpha_final}. Recall that we assume $r^2 \geq m\alpha_*$. By using this assumption and \eqref{a_eq:prob_out_of_nbd_bdd_alpha}, we have 
    \begin{align*}
        \frac{r^2}{\sigma_*^2}-1 \geq \frac{r^2}{m\alpha_*}-1\geq 0,
    \end{align*}
    and
    \begin{align}
    \label{a_eq:prob_out_of_nbd_min_final}
        \min\left\{\left(\frac{r^2}{\sigma_*^2}-1\right)^2,\frac{r^2}{\alpha_*}-m\right\}\geq \min\left\{\left(\frac{r^2}{m\alpha_*}-1\right)^2,\frac{r^2}{\alpha_*}-m\right\}.
    \end{align}
    Denote $\frac{r^2}{m\alpha_*}-1$ by $\eta$. Then, from \eqref{a_eq:prob_out_of_nbd_bdd_final} and \eqref{a_eq:prob_out_of_nbd_min_final}, we obtain
    \[
        \sP(\|\vtheta-\vtheta_0\|^2 \geq r^2)= \sP(\|\vtheta-\vtheta_0\| \geq r) \leq \exp\left(-\frac{1}{8}\min\left\{\eta^2,m\eta\right\}\right).
    \]
\end{proof}

\subsection{Proofs for Theorem~\ref{thm:prob_each_class}}
\label{a_sec:thm2_proof}
In the proof of Theorem~\ref{thm:prob_each_class}, we use Markov's inequality as stated below:
\begin{proposition}
\label{a_prop:markov_ineq} 
    (Markov's inequality, \citet{resnick2019probability}) If $\rx$ is a non-negative random variable with $\mathbb{E}[\rx]<\infty$, then 
        \[
            \sP(\rx\geq\lambda) \leq \frac{\E[\rx]}{\lambda},
        \]
    for any $\lambda>0$.
\end{proposition}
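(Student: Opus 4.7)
The plan is to reduce the statement to the standard indicator-function bound that underlies essentially all first-moment tail inequalities. First I would introduce the indicator random variable $\mathds{1}(\rx \geq \lambda)$, which takes value $1$ on the event $\{\rx \geq \lambda\}$ and $0$ otherwise, and record the elementary identity $\sP(\rx \geq \lambda) = \E[\mathds{1}(\rx \geq \lambda)]$ coming from the definition of the expectation of an indicator.

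The central step is the pointwise inequality
\begin{align*}
    \lambda\,\mathds{1}(\rx \geq \lambda) \leq \rx,
\end{align*}
which I would verify by splitting into two cases. On the event $\{\rx \geq \lambda\}$, the left-hand side equals $\lambda$ while the right-hand side is at least $\lambda$, so the inequality holds. On the complementary event $\{\rx < \lambda\}$, the left-hand side vanishes while the right-hand side is non-negative by the hypothesis $\rx \geq 0$, so the inequality again holds. Taking expectations on both sides—valid because $\E[\rx]<\infty$—and invoking monotonicity of expectation yields $\lambda\,\sP(\rx \geq \lambda) \leq \E[\rx]$. Dividing by $\lambda>0$ gives the claimed bound.

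There is essentially no obstacle to this argument: it is a two-line textbook computation. The only hypotheses actually used are non-negativity of $\rx$ (to handle the complementary event in the pointwise comparison) and integrability (so that the right-hand side $\E[\rx]/\lambda$ is a finite, meaningful bound); the strict positivity of $\lambda$ is needed only to divide through at the last step.
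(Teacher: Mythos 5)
Your proof is correct: the pointwise inequality $\lambda\,\mathds{1}(\rx\ge\lambda)\le \rx$ (using non-negativity on the complementary event), followed by monotonicity of expectation and division by $\lambda>0$, is the standard argument for Markov's inequality, and you have stated exactly where each hypothesis is used. Note that the paper itself does not supply a proof of this proposition — it is quoted directly from \citet{resnick2019probability} and used as a black box in the proof of Theorem~\ref{thm:prob_each_class} — so there is no in-paper derivation to compare against; your argument is the canonical one and fills that role cleanly.
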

We also use the following Lemma~\ref{lem:dist_simplex_origin} to prove Theorem~\ref{thm:prob_each_class}.
\begin{lemma}
\label{lem:dist_simplex_origin}
    Let $\va$ be a vector in the $(d-1)$-dimensional unit simplex, $\Delta^{d-1}\subset\R^d$. Then,
    \begin{align}
    \label{eq:dist_simplex_origin}
        \min_{\va\in\Delta^{d-1}}\|\va\|^2=\frac{1}{d}.
    \end{align}
\end{lemma}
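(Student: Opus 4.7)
The plan is to obtain the claimed minimum by a direct application of the Cauchy–Schwarz inequality, which is cleaner than setting up Lagrange multipliers and automatically handles the non-negativity constraints defining $\Delta^{d-1}$. Concretely, for any $\va=(a_1,\ldots,a_d)\in\Delta^{d-1}$, I would pair $\va$ with the all-ones vector $\vone\in\R^d$ and write
\[
    1 = \sum_{i=1}^d a_i = \langle \va,\vone\rangle \leq \|\va\|\cdot\|\vone\| = \sqrt{d}\,\|\va\|,
\]
which rearranges to $\|\va\|^2 \geq 1/d$, giving the lower bound.

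Next I would exhibit a point in $\Delta^{d-1}$ that achieves this bound. The natural candidate is the barycenter $\va^* = (1/d,1/d,\ldots,1/d)$: its entries are non-negative and sum to $1$, so $\va^*\in\Delta^{d-1}$, and a direct computation gives $\|\va^*\|^2 = d\cdot(1/d)^2 = 1/d$. Combining the two, $\min_{\va\in\Delta^{d-1}}\|\va\|^2 = 1/d$, with the minimum attained uniquely at the barycenter (by the equality case of Cauchy–Schwarz).

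There is no real obstacle here; the only thing to be a bit careful about is that $\Delta^{d-1}$ is a compact subset of $\R^d$ and $\|\cdot\|^2$ is continuous, so the infimum is in fact a minimum and the $\min$ in the statement is justified. If a more self-contained argument is preferred, one could replace Cauchy–Schwarz by the QM–AM inequality $\tfrac{1}{d}\sum a_i^2 \geq \bigl(\tfrac{1}{d}\sum a_i\bigr)^2$, which yields exactly the same bound and the same equality case, and uses only that $x\mapsto x^2$ is convex.
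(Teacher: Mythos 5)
Your proof is correct, and it takes a genuinely different (and arguably cleaner) route than the paper. The paper relaxes the problem to minimizing $\|\va\|^2$ over the affine hyperplane $\sum_i a_i = 1$, solves it via Lagrange multipliers to get the barycenter, and then observes a posteriori that the solution happens to lie in $\Delta^{d-1}$. Your argument instead proves the lower bound $\|\va\|^2 \ge 1/d$ directly for every $\va\in\Delta^{d-1}$ via Cauchy--Schwarz (or QM--AM), then exhibits the barycenter as attaining it. The advantages of your route: it avoids calculus and the implicit need to justify that a critical point of the relaxed problem is actually a global minimizer, it never has to worry about the dropped non-negativity constraints, and the equality case of Cauchy--Schwarz gives uniqueness of the minimizer for free. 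Your remark about compactness justifying the $\min$ is unnecessary once you have both an explicit lower bound and an attaining point, but it does no harm. The paper's Lagrange-multiplier argument is also fine, just heavier machinery for the same small fact.
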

\begin{proof}
    Consider the minimization problem that
    \begin{align}\label{eq:extended_opt_prob}
        \argmin_{\va=(a_1,a_2,\cdots,a_d)\in R^d} &\|\va\|^2,
        \\
        \text{subject to~~~}&  \sum_{i=1}^d a_i = 1.
        \nonumber
    \end{align}
    By the method of Lagrange multipliers \citep{stewart2020calculus}, its minimum point satisfies  
    \begin{align}
    \label{eq:lagrangian}
        -\lambda \nabla_\va \|\va\|^2 = \nabla_\va \left(\sum_{i=1}^d a_i\right),
    \end{align}
    where $\lambda\in\R$. By solving \eqref{eq:lagrangian}, we have $a_i=-\frac{1}{2\lambda}$ for all $i$. Since $\sum_{i=1}^d a_i=1$, we have $\lambda = -\frac{d}{2}$ and $a_i=\frac{1}{d}$. Hence, the solution of \eqref{eq:extended_opt_prob}, $\va^*$, is
    \[
        \va^*=\left(\frac{1}{d}, \frac{1}{d},\cdots,\frac{1}{d}\right),
    \]
    and this is obviously in $\Delta^{d-1}$. That is, $\va^*=\argmin_{\va\in\Delta^{d-1}}\|\va\|^2$. Since $\|\va^*\|^2 = \frac{1}{d}$, \eqref{eq:dist_simplex_origin} is satisfied.
\end{proof}
We can finally prove Theorem~\ref{thm:prob_each_class} by using Proposition~\ref{a_prop:markov_ineq} and Lemma~\ref{lem:dist_simplex_origin}.
\begin{customthm}{\ref{thm:prob_each_class}}
    Let $\vv^{(i)}=\left(v^{(i)}_1,v^{(i)}_2,\cdots,v^{(i)}_d\right)\in\Delta^{d-1}$ where $v^{(i)}_i=1$ and $\sA_i$ be a subset of $\Delta^{d-1}$, defined in \eqref{def:ith_part_simplex}. Then, 
    \begin{align}
    \label{a_eq:lower_prob_each_class}
        \sP_{\vx\sim p(\vx)}(\vf_{\vtheta_0^*+\vepsilon}(\vx)\in\sA_i) \geq 1-\sqrt{d}\E_{\vx\sim p(\vx)}[\|\vv^{(i)}-\vf_{\vtheta_0^*+\vepsilon}(\vx)\|],
    \end{align}
    for a given $\vepsilon\sim\normal(\vzero,\mSigma)$.
\end{customthm}
\begin{proof}
    Let $\sV_i = \left\{\va=(a_1,a_2,\cdots,a_d)\in\Delta^{d-1}:a_i\geq\frac{1}{2}\right\}$. By the definition of $\sA^{(i)}$, we have $\sV_i\subset\sA_i$. Furthermore,
    \begin{align}
        \|\vv^{(i)}-\va\|^2 &= \sum_{j=1}^d (v^{(i)}_j - a_j)^2\label{eq:dist_from_vertex}
        \\
        &= (1-a_i)^2 + \sum_{j\neq i} a_j^2
        \nonumber
        \\
        &= 1-2a_i + \sum_{j=1}^d a_j^2.
        \nonumber
    \end{align}
    
    Suppose that $\sW_i=\{\va\in\Delta^{d-1}:\|\vv^{(i)}-\va\| < \frac{1}{\sqrt{d}}\}$. By using \eqref{eq:dist_from_vertex}, we obtain
    \begin{align}
        \va\in\sW_i &\Longleftrightarrow 1-2a_i+\sum_{j=1}^d a_j^2 < \frac{1}{d}
        \nonumber
        \\
        &\Longleftrightarrow a_i> \frac{1}{2}+\frac{1}{2}\sum_{j=1}^d a_j^2 -\frac{1}{2d}
        \nonumber
        \\
        &\Longrightarrow a_i> \frac{1}{2}~~\left(\because \min_{\va\in\Delta^{d-1}}\|\va\|^2=\frac{1}{d}\textrm{ ~by Lemma~\ref{lem:dist_simplex_origin}}\right).
        \nonumber
    \end{align}
    Hence, $\va\in\sW_i$ implies that $\va\in\sV_i$, and we have $\sW_i\subset\sV_i\subset\sA_i$. By Markov's inequality in Proposition~\ref{a_prop:markov_ineq}, we obtain
    \begin{align}
        \sP_{\vx\sim p(\vx)}(\vf_{\vtheta_0^*+\vepsilon}(\vx)\in\sA_i) &\geq \sP_{\vx\sim p(\vx)}(\vf_{\vtheta_0^*+\vepsilon}(\vx)\in\sW_i)
        \nonumber
        \\
        &= \sP_{\vx\sim p(\vx)}\left(\|\vv^{(i)}-\vf_{\vtheta_0^*+\vepsilon}\|<\frac{1}{\sqrt{d}}\right)
        \nonumber
        \\
        &=1-\sP_{\vx\sim p(\vx)}\left(\|\vv^{(i)}-\vf_{\vtheta_0^*+\vepsilon}\|\geq\frac{1}{\sqrt{d}}\right)
        \nonumber
        \\
        &\geq 1-\sqrt{d}\E_{\vx\sim p(\vx)}[\|\vv^{(i)}-\vf_{\vtheta_0^*+\vepsilon}(\vx)\|].
        \nonumber
    \end{align}
\end{proof}

\subsection{Proofs for Theorem~\ref{thm:he_init}}
\label{a_sec:thm3_proof}

\begin{proposition}
\label{a_prop:slln}
    (Kolmogorov's strong law of large numbers, Theorem 7.5.1 in \citet{resnick2019probability}) Let $\{\rx_1,\rx_2,\cdots\}$ be random samples from $\rx$. If $\E[\rx]<\infty$, then
    \[
        \frac{1}{n}\sum_{i=1}^n \rx_i \rightarrow \E[\rx],
    \]
    as $n\rightarrow\infty$.
\end{proposition}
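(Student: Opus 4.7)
The plan is to follow Etemadi's 1981 proof, which establishes almost-sure convergence for i.i.d.\ summands (in fact only pairwise independence is needed) by truncation and a geometric subsequence argument, bypassing Kolmogorov's maximal inequality. The four steps are: (i) reduce to nonnegative $\rx$ by splitting $\rx = \rx^+ - \rx^-$; (ii) truncate at level $n$ and show the truncation is asymptotically equivalent; (iii) control the variance of truncated partial sums along a geometric subsequence and apply Borel--Cantelli; (iv) fill in the gaps between subsequence indices by monotonicity.

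For step (ii), assume $\rx \geq 0$ (the general case follows by applying the nonnegative result to $\rx^+$ and $\rx^-$ separately; the integrability hypothesis is understood as $\E|\rx| < \infty$). Set $\ry_i = \rx_i \mathds{1}\{\rx_i \leq i\}$. Since
\begin{align*}
\sum_{i=1}^\infty \sP(\rx_i \neq \ry_i) = \sum_{i=1}^\infty \sP(\rx > i) \leq \E[\rx] < \infty,
\end{align*}
the first Borel--Cantelli lemma gives $\rx_i = \ry_i$ eventually almost surely, so it suffices to analyze $\tilde S_n = \sum_{i=1}^n \ry_i$. By monotone convergence $\E[\ry_n] \to \E[\rx]$, and then Ces\`aro averaging yields $\E[\tilde S_n]/n \to \E[\rx]$.

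For step (iii), the key variance estimate is
\begin{align*}
\sum_{i=1}^\infty \frac{\Var(\ry_i)}{i^2} \leq \sum_{i=1}^\infty \frac{\E[\ry_i^2]}{i^2} = \sum_{i=1}^\infty \frac{1}{i^2}\E\!\left[\rx^2 \mathds{1}\{\rx \leq i\}\right] \leq 2\E[\rx] < \infty,
\end{align*}
where the final bound comes from the Fubini step $\sum_{i \geq \lceil t \rceil} 1/i^2 \leq 2/t$ applied to $\E[\rx^2 \mathds{1}\{\rx \leq i\}] = \int_0^i 2t \sP(t < \rx \leq i)\,dt$. Fixing $\alpha > 1$ and $k_n = \lfloor \alpha^n \rfloor$, Chebyshev's inequality yields
\begin{align*}
\sum_{n=1}^\infty \sP\!\left(\left|\tilde S_{k_n} - \E[\tilde S_{k_n}]\right| > \epsilon k_n\right) \leq \frac{1}{\epsilon^2}\sum_{i=1}^\infty \Var(\ry_i)\sum_{n:\, k_n \geq i}\frac{1}{k_n^2},
\end{align*}
and since $\sum_{n:\,k_n \geq i} k_n^{-2}$ is a geometric tail of ratio $\alpha^{-2}$ starting near $i$, it is $O(1/i^2)$, making the whole double sum finite. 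Borel--Cantelli then gives $\tilde S_{k_n}/k_n \to \E[\rx]$ almost surely.

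Step (iv) exploits nonnegativity: for $k_n \leq m < k_{n+1}$, we have $\tilde S_{k_n} \leq \tilde S_m \leq \tilde S_{k_{n+1}}$, hence
\begin{align*}
\frac{k_n}{k_{n+1}}\cdot\frac{\tilde S_{k_n}}{k_n} \leq \frac{\tilde S_m}{m} \leq \frac{k_{n+1}}{k_n}\cdot\frac{\tilde S_{k_{n+1}}}{k_{n+1}}.
\end{align*}
Letting $n \to \infty$ yields $\alpha^{-1}\E[\rx] \leq \liminf_m \tilde S_m/m \leq \limsup_m \tilde S_m/m \leq \alpha\E[\rx]$ a.s., and intersecting over a countable sequence $\alpha \downarrow 1$ (say $\alpha = 1 + 1/k$) finishes the argument. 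The main obstacle I anticipate is the bookkeeping around the variance estimate---correctly interchanging summations via Fubini and verifying that the geometric subsequence produces a summable tail with the right constants; after that, the Borel--Cantelli and sandwich arguments are routine.
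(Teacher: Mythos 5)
The paper does not contain a proof of this proposition at all: it is quoted directly as Theorem 7.5.1 of \citet{resnick2019probability} and used as a black box inside the proof of Theorem~\ref{thm:he_init}, so there is no paper argument to compare yours against. What you have written is Etemadi's proof of the strong law, and it is correct: truncate $\rx_i$ at level $i$; use the first Borel--Cantelli lemma and $\sum_i \sP(\rx > i) \le \E[\rx]$ to equate the original and truncated sequences eventually a.s.; bound $\sum_i \Var(\ry_i)/i^2$ by a Fubini/layer-cake computation; apply Chebyshev along the geometric subsequence $k_n=\lfloor\alpha^n\rfloor$ and Borel--Cantelli again; and interpolate via monotonicity of $\tilde S_m$, which is why the reduction $\rx=\rx^+-\rx^-$ is needed first. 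This differs from Kolmogorov's original route (maximal inequality plus Kronecker's lemma applied to $\sum_i(\rx_i-\E[\rx_i])/i$, which handles the two-sided case directly); Etemadi's version buys pairwise independence and avoids the maximal inequality at the cost of the nonnegative split and the sandwich step. Two small remarks. First, your constant is off: with $\sum_{i>t} i^{-2}\le 2/t$ the Fubini step gives
\begin{align*}
\sum_{i=1}^{\infty}\frac{\E[\ry_i^2]}{i^2}
\;\le\;\int_0^\infty 2t\,\sP(\rx>t)\cdot\frac{2}{t}\,dt
\;=\;4\,\E[\rx],
\end{align*}
not $2\E[\rx]$; only finiteness is used, so nothing downstream is affected. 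Second, the hypothesis as printed (``$\E[\rx]<\infty$'') must be read as $\E|\rx|<\infty$, exactly as you assumed, since that is what licenses both the positive/negative decomposition and the tail bound in step (ii).
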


\begin{customthm}{\ref{thm:he_init}}
    Let $\vf_{\vtheta_0}$ be a fully connected network with ReLU \citep{nair2010rectified} non-linearity. We write the layerwise non-linear transformation from $\vx_l$ to $\vx_{l+1}$ for $l\neq 0$ as 
    \begin{align}
    \label{a_eq:single_layer_transform}
        \vf_{\vtheta_0^{(l:l+1)}}(\vx_l)
        =
        \mW^{(l+1)}\relu(\vx_l)+\vb^{(l+1)},
    \end{align}
    where $\mW^{(l+1)}\in\R^{n_{l+1}\times n_l}$ is the weight matrix and $\vb\in\R^{n_{l+1}}$ is the bias vector. 
    Assume that each element of $\vx_l$ has a symmetric distribution at $0$ and all elements of $\vx_l$ are mutually independent. If the $(i,j)$-th entry of $\mW^{(l+1)}$, $W_{ij}^{(l+1)}$, is a random sample from $\normal(0,\sigma_l^2)$ and $\vb^{(l+1)}$ is $\vzero$, then the following equality holds for all $k=1,2,\cdots, n_{l+1}$ when $\sigma_l=\sqrt{\frac{2}{n_l}}$ with sufficiently large $n_l$:
    \begin{align}
    \label{a_eq:he_jaco_condition}
        1\approx\left\|J_{\vtheta_0^{(l:l+1)}}^{(k)}(\vx_l)\right\|_F
        =
        \|\mW^{(l+1)} \mathds{1}(\vx_l > 0)\|_F, 
    \end{align}
    where $\mathds{1}(\vx_l >0)$ turns each positive entry in $\vx_l$ to $1$ and $0$ otherwise.
\end{customthm}
\begin{proof}
    Denote the $i$th element of $\vx_l$ by $x_{l,i}$. Simply, the $i$th row of Jacobian of \eqref{a_eq:single_layer_transform} with respect to $\vx_l$ is $W^{(l+1)}_{i\cdot}\mathds{1}(\vx_l > 0)$ where $W^{(l+1)}_{i\cdot}$ is the $i$th row of $\mW^{(l+1)}$ and
    $\mathds{1}(\vx_l >0)=(\vone(x_{l,1}),\vone(x_{l,2}),\cdots,\vone(x_{l,n_{l}}))$ satisfying
    \[
        \vone(x)= \begin{cases}
        1,&\text{ if }x > 0, \\
        0,&\text{ otherwise}.
        \end{cases}
    \]
    If we assume that $x_{l,i}$ is sampled from a symmetric distribution at 0 and $x_{l,i}$'s are mutually independent for all $i=1,2,\cdots,n_l$,\footnote{This condition is the same as \citet{he2015delving} assumed.} then $\vone(x_{l,i})$'s are random samples from $\bernoulli(0.5)$.
    
    Since $W^{(l+1)}_{ij}$'s are random samples from $\normal(\vzero,\sigma_l^2)$, $W^{(l+1)}_{ij}\vone(x_{l,j})$ are random samples from $\bernoulli(0.5)\times\normal(\vzero,\sigma_l^2)$ for all $j=1,2,\cdots,n_l$. By using the strong law of large number in Proposition~\ref{a_prop:slln}, we have
    \begin{align}
    \label{a_eq:he_init_jacobian}
        \left\|J_{\vtheta_0^{(l:l+1)}}^{(k)}(\vx_l)\right\|_F^2&=\left\|W_{\cdot}^{(l+1)}\mathds{1}(\vx_l >0)\right\|_F^2
        \\
        &= n_l\times \frac{1}{n_l}\sum_{j=1}^{n_l}\left\{W^{(l+1)}_{ij}\vone(x_{l,j})\right\}^2\approx n_l\E[\rx^2\ry^2],
        \nonumber 
    \end{align}
    where $\rx\sim\normal(0,\sigma_l^2)$ and $\ry\sim\bernoulli(0.5)$ for sufficiently large $n_l$.
    
    Since $\rx$ and $\ry$ are independent, we obtain
    \begin{align}
    \label{a_eq:slln_limit}
        \E[\rx^2\ry^2]=\E[\rx^2]\E[\ry^2]=\frac{1}{2}\sigma_l^2. 
    \end{align}
    From \eqref{a_eq:he_init_jacobian} and \eqref{a_eq:slln_limit},
    \[
        \sigma_l = \sqrt{\frac{2}{n_l}},
    \]
    implies that 
    \[
        \left\|J_{\vtheta_0^{(l:l+1)}}^{(k)}(\vx_l)\right\|_F^2\approx 1.
    \]
\end{proof}

\section{Experiments for \S\ref{sec:theory}}
\label{a_sec:theory_exp}

In this section, we empirically validate our claims discussed in \S\ref{sec:degenerate_remedy}, \S\ref{sec:input_ignoring}, and \S\ref{sec:final_loss}. For this, we pre-train $\fcn$ (a 784-392-392-392-2 fully connected network) described in \S\ref{a_sec:model_archi}, using unlabelled MNIST examples. Before pre-training, we radomly initialize our network by Xavier initialization \citep{glorot2010understanding}. Unless explicitly stated, all learning hyperparameters for pre-training, such as an optimizer and a learning rate, are the same as the hyperparameters presented in \textbf{our pre-training} of \S\ref{a_sec:pretrain}.

\subsection{Real cases for the degenerate softmax and the input-output detachment} \label{a_sec:degenerate_cases_exp}

We insisted that minimizing \eqref{eq:uniformity_loss} with respect to $\vtheta_0$ can make each perturbed model near $\vtheta_0$ be a constant function as shown in \eqref{eq:degenerate_ex} of \S\ref{sec:degenerate_remedy}. To empirically validate this, we pre-train $\fcn$ by minimizing \eqref{eq:final_loss} with $\lambda=\xi=0$.

Figure \ref{a_fig:degenerate_ex} shows that minimizing only $\loss^{uni}$ can cause the degenerate softmax and the input-output detachment. In Figure \ref{a_fig:degenerate_ex}~(a), we demonstrate that minimizing $\loss^{uni}$ properly encourages our perturbed model prediction to be uniform over the $\uniform(\Delta^{d-1})$ for a given $\vx\sim p(\vx)$. However, we observe that each perturbed model becomes a constant function after pre-training as shown in Figure \ref{a_fig:degenerate_ex}~(b). It means that our pre-trained model predicts regardless of inputs (the input-output detachment). Moreover, the perturbed model of the third plot in Figure \ref{a_fig:degenerate_ex}~(b) collapses into $(0.9,0.1)\in\Delta^{2-1}$ for most inputs. In other words, this model classifies all instances into class 0, and this is the case of the degenerate softmax. We indeed need additional regularization terms to avoid both the input-output detachment and the degenerate softmax when minimizing $\loss^{uni}$.

\begin{figure}[t]
    \begin{center}
    \subfigure[$\vf_{\vtheta_0+\vepsilon}(\vx)$ given $\vx\sim p(\vx)$]{\includegraphics[trim=0cm 0cm 0cm 0.9cm, clip,width=\linewidth]{./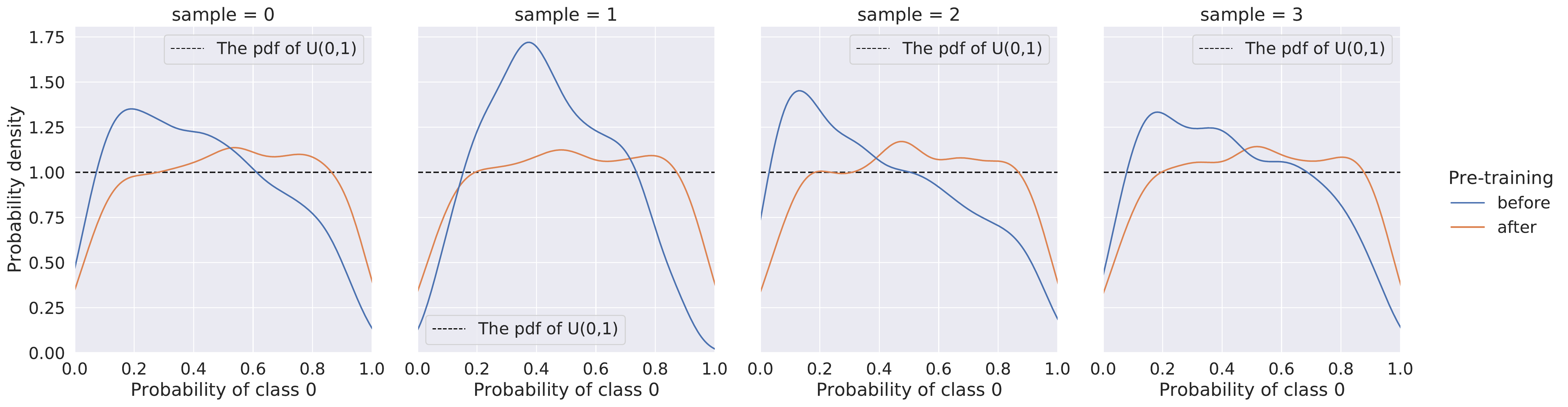}}
    \subfigure[$\vf_{\vtheta_0+\vepsilon}(\vx)$ given $\vepsilon\sim\normal(\vzero, \mSigma)$]{\includegraphics[trim=0cm 0cm 0cm 0.9cm, clip,width=\linewidth]{./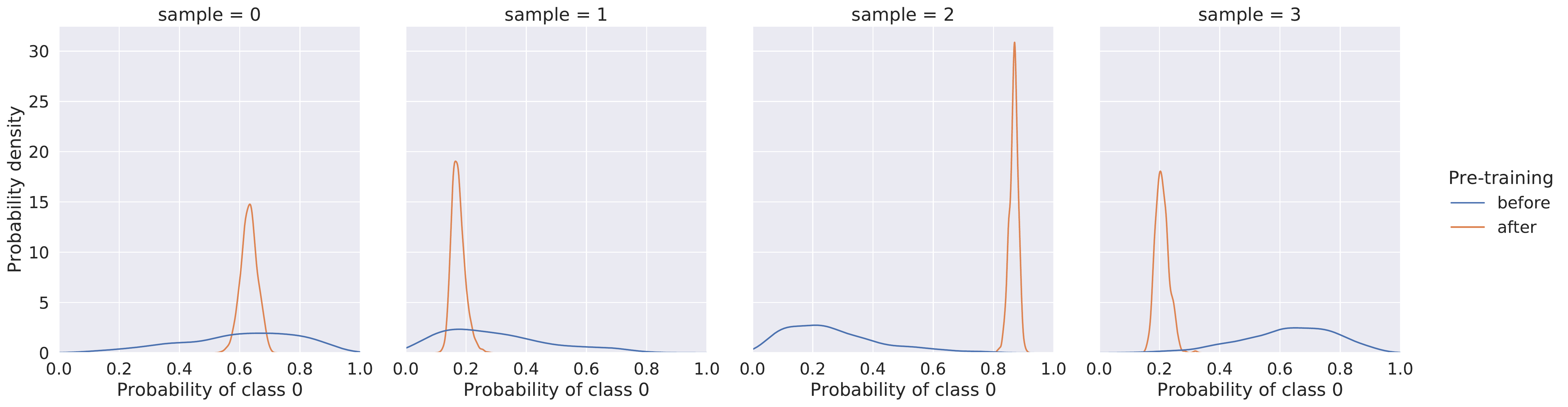}}
	\end{center}
	\caption{Effect of pre-training $\fcn$ on MNIST by minimizing only $\loss^{uni}$ with respect to $\vtheta_0$. Since $\fcn$ is a model for binary classification, we present the probability of class 0 on $[0,1]$ instead of the model prediction on $\Delta^{2-1}$. We note that the probability of class 0 follows $\uniform(0,1)$ if and only if the model prediction is equal to $\uniform(\Delta^{2-1})$ in distribution. (a): We randomly sample $\vx_i$ from $p(\vx)$ and present the pdf of the first element (the probability of class 0) of $\vf_{\vtheta_0+\vepsilon}(\vx_i)$ for $i=1,2,3,4$. For each density plot, we use 1,024 random perturbations from $\normal(\vzero,\mSigma)$. Compared to the randomly initialized network (blue), the distribution of model predictions given $\vx\sim p(\vx)$ for the pre-trained network (orange) converges to $\uniform(\Delta^{2-1})$ (black dotted) in distribution. (b): We demonstrate the pdf of the first element of $\vf_{\vtheta_0+\vepsilon_i}(\vx)$ for $i=1,2,3,4$. Each density function uses 1,024 random instances from $p(\vx)$. After pre-training, the predictions of each perturbed model collapsed into a constant regardless of inputs.} \label{a_fig:degenerate_ex}
\end{figure}

\subsection{Impact of \eqref{eq:ignore_loss} on nearby models}
\label{a_sec:nearby_jaco_exp}
To prevent the input-output detachment, we proposed $\loss^{iod}$ in \eqref{eq:ignore_loss}. By minimizing $\loss^{iod}$ with respect to $\vtheta_0$, $\left\|J^{(i)}_{\vtheta_0^{(l:L)}}(\vx_l) \right\|_F$ converges to 1 for all $i=1,2,\cdots,d$ and $l=0,1,\cdots,L-1$. In other words, $\vf_{\vtheta_0}(\vx)$ is not a constant function of $\vx$. By the continuity of $\left\|J^{(i)}_{\vtheta_0^{(l:L)}}(\vx_l) \right\|_F$ with respect to $\vtheta_0$, we have $\left\|J^{(i)}_{{(\vtheta_0+\vepsilon)}^{(l:L)}}(\vx_l) \right\|_F$ is approximately equal to $\left\|J^{(i)}_{\vtheta_0^{(l:L)}}(\vx_l) \right\|_F$ when $\|\vepsilon\|^2$ is sufficiently small. It means that $\vf_{\vtheta_0+\vepsilon}(\vx)$ is not constant function of $\vx$ as well.

We empirically validate that minimizing $\loss^{iod}(\vtheta_0)$ with respect to $\vtheta_0$ encourages $\vf_{\vtheta_0+\vepsilon}(\vx)$ not to converge a constant function of $\vx$. In Figure \ref{a_fig:iodloss_nearby}, $\loss^{iod}(\vtheta_0+\vepsilon)$ (orange) decreases as we minimizes $\loss^{iod}(\vtheta_0)$ (blue) where $\vepsilon\sim\normal(\vzero,\mSigma(s))$.\footnote{We recommended the choice of $\mSigma$ for $\fcn$, which is determined by both $s$ and the dimensionality of each layer, in \S\ref{sec:final_loss}. $\mSigma(s)$ refers to this recommendation for $\mSigma$.} Since $\|\vepsilon\|$ is proportional to $s$, the effect of minimizing $\loss^{iod}(\vtheta_0)$ on perturbed models is diminished as $s$ increases. 

\begin{figure}[t]
    \begin{center}
    \includegraphics[trim=0cm 0cm 0cm 0.9cm, clip,width=\linewidth]{./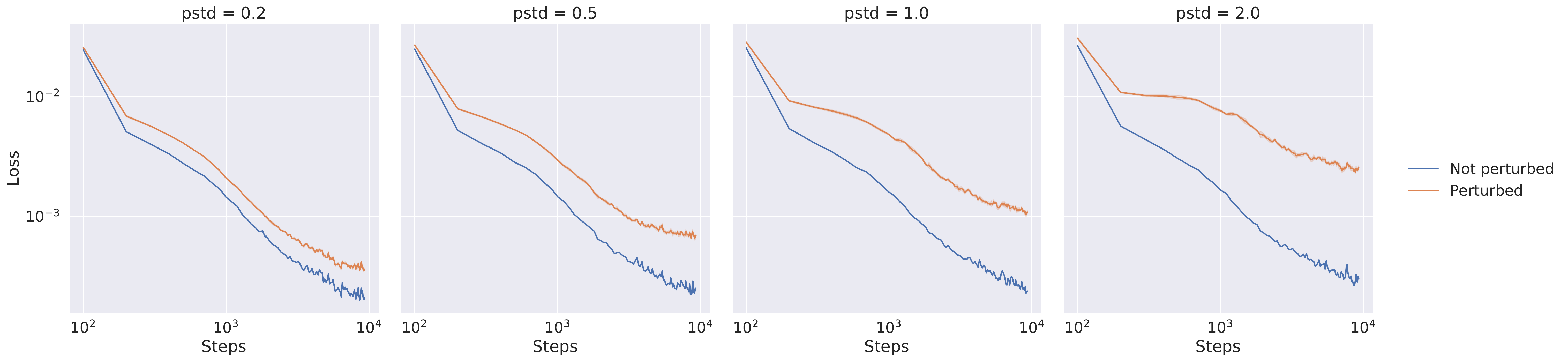}
	\end{center}
	\hspace{1.1cm}(a)~$s=\sqrt{0.2}$\hspace{1.1cm}(b)~$s=\sqrt{0.5}$\hspace{1.1cm}(c)~$s=\sqrt{1.0}$\hspace{1.1cm}(d)~$s=\sqrt{2.0}$
	\caption{We present $\loss^{iod}(\vtheta_0)$ (blue) and $\loss^{iod}(\vtheta_0+\vepsilon)$ (orange), as the function of optimization steps in log-log scale, when we pre-train $\fcn$ on MNIST by minimizing $\loss^{iod}(\vtheta_0)$ with respect to $\vtheta_0$. For $\loss^{iod}(\vtheta_0+\vepsilon)$, we report mean (curve) $\pm$ stdev (shaded area) across four random perturbations from $\normal(\vzero, \mSigma(s)$). $\mSigma(s)$ is described in \S\ref{sec:final_loss}. For small $s$, such as $s=\sqrt{0.2}$ and $s=\sqrt{0.5}$, minimizing $\loss^{iod}(\vtheta_0)$ not only prevents $\vf_{\vtheta_0}(\vx)$ from being constant but also prevents $\vf_{\vtheta_0+\vepsilon}(\vx)$ from being constant.}\label{a_fig:iodloss_nearby}
\end{figure}

\subsection{Necessity of all three loss functions}
\label{a_sec:need_all_exp}
Our final loss function in \eqref{eq:final_loss} is composed of $\loss^{uni}$, $\loss^{sd}$, and $\loss^{iod}$. In \S\ref{a_sec:degenerate_cases_exp}, we empirically demonstrated that minimizing only $\loss^{uni}$ with respect to $\vtheta_0$ ($\lambda=\xi=0$) causes both the degenerate softmax and the input-output detachment. We designed $\loss^{sd}$ and $\loss^{iod}$ to avoid the degenerate softmax and the input-output detachment, respectively. In this section, we empirically validate the effectiveness of $\loss^{sd}$ and $\loss^{iod}$.

\subsubsection{Effectiveness of \eqref{eq:underclass_loss}}
According to our analysis in \S\ref{sec:under-class}, minimizing \eqref{eq:final_loss} with $\lambda>0$ encourages each perturbed model to classify inputs into all $d$ classes. We therefore pre-train $\fcn$ on MNIST with $(\lambda, \xi)\in\{(0.0,0.0), (0.4,0.0), (0.0,1.0), (0.4,1.0)\}$, and count the number of perturbed models that have the degenerate softmax on a mini-batch of size 32. In other words, if the perturbed model classifies all 32 instances in the mini-batch into $k$ classes where $k<d$, we consider that this perturbed model has the degenerate softmax. We report the mean and standard deviation of the counts (\%) out of 256 perturbed models across 128 random mini-batches.    

\begin{table}[h]
\caption{We demonstrate the average ($\pm$stdev) ratio (\%) of perturbed models with the degenerate softmax to 256 perturbed models across 128 mini-batches of size 32. Without positive $\lambda$, most of perturbed models has the degenerate softmax for both $\xi=0.0$ and $\xi=1.0$. $\loss^{sd}$ effectively helps our pre-training to avoid the issue of degnerate softmax.} 
\label{a_tab:effect_loss_sd}
\begin{center}
    \begin{tabular}{cc|c}
    \hline
    \multicolumn{1}{c}{$\boldsymbol{\lambda}$}&\multicolumn{1}{c|}{$\boldsymbol{\xi}$}&\multicolumn{1}{c}{\bf Ratio (\%)}
    \\
    \hline 
    \hline
    0.0&0.0&89.32$\pm$0.94\\
    0.4&0.0&~~0.41$\pm$0.38\\
    0.0&1.0&80.83$\pm$1.03\\
    0.4&1.0&~~1.39$\pm$0.62\\
    \hline
\end{tabular}
\end{center}
\end{table}

Table \ref{a_tab:effect_loss_sd} shows that minimizing $\loss^{sd}$ ($\lambda=0.4$) greatly reduces the number of perturbed models that have the degenerate softmax compared to $\lambda=0.0$. With or without minimizing $\loss^{iod}$ (either $\xi=1.0$ or $\xi=0.0$), the minimization of $\loss^{sd}$ ($\lambda=0.4$) makes each perturbed model classify inputs into all $d$ classes. To prevent the degenerate softmax, we need positive $\lambda$ for our final loss function in \eqref{eq:final_loss}.

\subsubsection{Effectiveness of \eqref{eq:ignore_loss}}

We designed $\loss^{iod}$ in \S\ref{sec:input_ignoring} to alleviate the issue of input-output detachment. In this section, we analyze why this issue matters and how $\loss^{iod}$ remedies it.
Let $\vtheta_0^*$ be an optimal solution of minimizing \eqref{eq:final_loss} with $\xi=0$, then $\vtheta_0^*$ enables our model to find $\vf_{\vtheta^*}$ approximating a given target mapping $\vf^*$ where $\vtheta^*=\vtheta_0^*+\vepsilon^*$ and $\vepsilon^*\sim\normal(\vzero,\mSigma)$ with $\mSigma=\diag(\sigma_1^2,\sigma_2^2,\cdots,\sigma_m^2)$. However, the existence of good approximation of $\vf^*$ in the neighborhood of $\vtheta_0^*$ does not guarantees a trajectory from $\vtheta_0^*$ to $\vtheta^*$, which is movable by gradient descent. Specifically, we assume that $\vf_{\vtheta^*}$ around $\vf_{\vtheta^*_0}$ approximates $\vf^*$ by adding Gaussian perturbation with a variance of $\sigma_i^2$ to $\theta^*_{0,i}$ where $\theta^*_{0,i}$ is the $i$th parameter of $\vtheta^*_0$. This assumption implies that, at $\vtheta_0^*$, the gradient descent for approximating $\vf^*$ should be able to update each parameter $\theta^*_{0,i}$ with a specific level of strength proportional to $\sigma_i^2$.

However, $\vtheta_0^*$ can be an initial parameter configuration where some parameters of $\vtheta_0^*$ be hardly changed by the gradient descent. For instance, suppose that our model is $\fcn$ parametrized by $\vtheta_0^*$. If each bias parameter of $\vtheta_0^*$ at the first hidden layer is a large negative number, then, no matter what the input is, most of ReLU units at the first hidden layer are dead. We call a neuron that dies for all inputs \text{a fully dead neuron}. It means that all parameters going into the fully dead neurons cannot be updated by back-propagation \citep{rumelhart1986learning}. Therefore, our model initialized by $\vtheta_0^*$ cannot reach $\vtheta^*$ satisfying $|\theta^*_j-\theta^*_{0,j}|>0$ where $\theta^*_{0,j}$ is one of parameters connected into the fully dead neurons.\footnote{$\theta^*_j$ and $\theta_{0,j}^*$ are the $j$th entries of $\vtheta^*$ and $\vtheta^*_0$, respectively.}

Even if all ReLU units at the first hidden layer die for all inputs, our model is still able to predict uniformly over $\Delta^{d-1}$ due to perturbations in the next layers. Particularly, our model can consider each perturbation added to bias parameters of the second hidden layer as a new input for all layers after the second hidden layer. After that, our model learns how to generate various predictions near $\vtheta_0$, which are completely detached from the original input distribution $p(\vx)$. In order to check whether this case occurs in $\vtheta_0^*$, we pre-train $\fcn$ on MNIST with $(\lambda, \xi)\in$\{(0.0,~0.0), (0.4,~0.0), (0.4,~1.0)\}. After pre-training, we count the number of fully dead neurons on each mini-batch of size 32 at each hidden layer. We report the average and standard deviation of these counts (\%) out of the number of all neurons at each hidden layer across 128 random mini-batches.

\begin{table}[h]
\caption{We present the average ($\pm$stdev) ratio (\%) of fully dead neurons at each hidden layer across 128 mini-batches of size 32. Hidden $i\in\{1,2,3\}$ refers to the $i$-th hidden layer. With $\lambda=0.0$ and $\xi=0.0$, more than 50\% of neurons are fully dead. Pre-training with $\lambda=0.4$ and $\xi=0.0$ greatly reduces the number of fully dead neurons except the second hidden layer (Hidden 2). When we use both $\lambda=0.4$ and $\xi=1.0$, there are a small number of fully dead neurons for all hidden layers.} 
\label{a_tab:effect_loss_iod}
\begin{center}
    \begin{tabular}{cc|ccc}
    \hline
    \multicolumn{1}{c}{$\boldsymbol{\lambda}$}&\multicolumn{1}{c|}{$\boldsymbol{\xi}$}&\multicolumn{1}{c}{\bf Hidden 1 (\%)}&\multicolumn{1}{c}{\bf Hidden 2 (\%)}&\multicolumn{1}{c}{\bf Hidden 3 (\%)}
    \\
    \hline 
    \hline
    0.0&0.0&52.11$\pm$3.07&63.88$\pm$1.30&80.08$\pm$0.61\\
    0.4&0.0&~~1.96$\pm$0.42&21.81$\pm$1.40&8.21$\pm$2.16\\
    0.4&1.0&~~3.75$\pm$0.77&~~5.12$\pm$1.20&~~0.02$\pm$0.06\\
    \hline
\end{tabular}
\end{center}
\end{table}

In Table \ref{a_tab:effect_loss_iod}, pre-training without minimizing both $\loss^{sd}$ and $\loss^{iod}$ ($\lambda=0.0$ and $\xi=0.0$) causes a lot of fully dead neurons for all hidden layers. For example, 52.11\% of the neurons are fully dead in the first hidden layer, and the number of fully dead neurons at the $i$-th hidden layer increases as $i$ increases. Using $\lambda=0.4$ and $\xi=0.0$ greatly reduces the number of fully dead neurons at the first and third hidden layers, but there are still 21.82\% of neurons that are fully dead at the second layer. It means that the input-output detachment occurs at the second hidden layer for the pre-trained model using $\lambda=0.4$ and $\xi=0.0$. With $\lambda=0.4$ and $\xi=1.0$, the number of fully dead neurons are evenly small for all hidden layers. This shows that minimizing $\loss^{iod}$ effectively prevents the input-output detachment.    


\section{Choice of hyperparameters}
\label{a_sec:hyperparams}
For all experiments in this section, we pre-train $\fcn$ (a 784-392-392-392-2 fully connected network) described in \S\ref{a_sec:model_archi}, using unlabelled MNIST examples. Before pre-training, we radomly initialize our network by Xavier initialization \citep{glorot2010understanding}. Unless explicitly stated, all learning hyperparameters for pre-training, such as an optimizer and a learning rate, are kept same as in \textbf{our pre-training} of \S\ref{a_sec:pretrain}.

\subsection{Median heuristic for MMD}
\label{a_sec:med_heuristic}
We use the median heuristic \citep{smola1998learning} for the bandwidth of MMD in $\loss^{uni}$, $\gamma$. In equation \eqref{eq:mmd_normal_nbd}, we compute three types of kernel embedding given $\vx\sim p(\vx)$: 
\begin{itemize}
    \item $\E_{\vepsilon\sim\normal(\vzero,\mSigma)}\E_{\vepsilon'\sim\normal(\vzero,\mSigma)}[k_\gamma(\vf_{\vtheta_0+\vepsilon}(\vx), \vf_{\vtheta_0+\vepsilon'}(\vx))]$
    \item $\E_{\vepsilon\sim\normal(\vzero,\mSigma)}\E_{\vu\sim \uniform(\Delta^{d-1})}[k_\gamma(\vf_{\vtheta_0+\vepsilon}(\vx), \vu)]$
    \item $\E_{\vu\sim \uniform(\Delta^{d-1})}\E_{\vu'\sim \uniform(\Delta^{d-1})}[k_\gamma(\vu, \vu')]$,
\end{itemize}
where $k_\gamma(\vx,\vy)=\exp\left(-\frac{\|\vx-\vy\|^2}{2\gamma^2}\right)$. To compute them, we first sample $\vepsilon_1,\vepsilon_1,\cdots,\vepsilon_M$ from $\normal(\vzero,\mSigma)$ and $\vu_1,\vu_2,\cdots,\vu_N$ from $\uniform(\Delta^{d-1})$. We then need to calculate three types of pairwise distances corresponding to the list above:
\begin{itemize}
    \item $\|\vf_{\vtheta_0+\vepsilon_i}(\vx)- \vf_{\vtheta_0+\vepsilon_j}(\vx)\|$ for all $1\leq i < j \leq M$
    \item $\|\vf_{\vtheta_0+\vepsilon_i}(\vx)-\vu_j\|$ for all $1\leq i \leq M$ and $1\leq j \leq N$
    \item $\|\vu_i- \vu_j\|$ for all $1\leq i < j \leq N$
\end{itemize}
The median heuristic sets $\gamma$ to the median of total $\binom{M}{2}+MN+\binom{N}{2}$ pairwise distances, $\gamma_{med}$. Based on the median heuristic, we finally use the sum of various kernel functions defined by 
\[
    k(\vx,\vy) = \sum_{i=-4}^{4} k_{\gamma_i}(\vx,\vy), 
\]
where $\gamma_i=2^{i}\times\gamma_{med}$.

\subsection{Standard deviation of perturbation}
\label{a_sec:perturbation_std}
As discussed in \S\ref{sec:final_loss}, we need to determine $s$ and $\lambda$ for our final loss in \eqref{eq:final_loss}. Our final loss function is composed of three loss functions: $\loss^{uni}$, $\loss^{sd}$, and $\loss^{iod}$. Minimizing our main loss function, $\loss^{uni}$, makes our model predictions be evenly spread over $\Delta^{d-1}$, so $\loss^{uni}$ should decrease to 0 during pre-training. $\loss^{sd}$ prevents the degenerate softmax, and we can measure its effect by the average ratio (\%) of perturbed models with the degenerate softmax on each mini-batch as shown in Table \ref{a_tab:effect_loss_sd}. $\loss^{iod}$ regularizes our pre-training to alleviate the issue of input-output detachment, and we are able to figure out its effect by the average ratio (\%) of fully dead neurons at each hidden layer as demonstrated in Table \ref{a_tab:effect_loss_iod}. Since the average ratio of fully dead neurons need to be 0 for all hidden layers, we evaluate the effect of $\loss^{iod}$ by their maximum ratio.  

To empirically find optimal $s$ and $\lambda$, we pre-train $\fcn$ on MNIST with various pairs of $s\in\{\sqrt{0.2}, \sqrt{0.5}, \sqrt{1.0}, \sqrt{2.0}\}$ and $\lambda\in\{0.2, 0.4, 1.0\}$ while fixing $\xi$ to 1. For each pair, we perform four random experiments.
We first monitor the behavior of $\loss^{uni}$ during pre-training with each setup.

\begin{figure}[h]
    \begin{center}
    \includegraphics[trim=0cm 0cm 0cm 0.9cm, clip,width=\linewidth]{./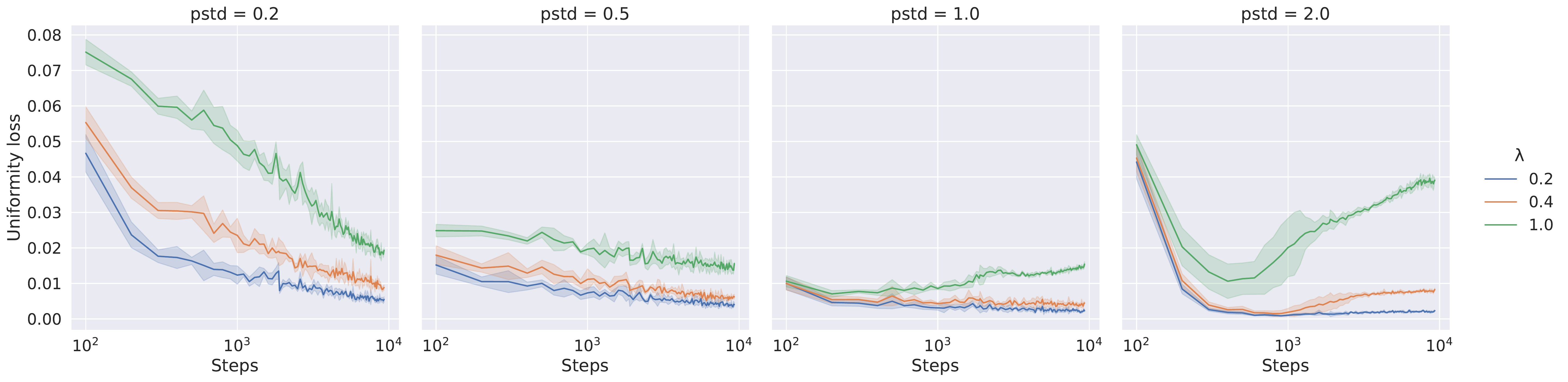}
	\end{center}
	\hspace{1.2cm}(a)~$s=\sqrt{0.2}$\hspace{1.2cm}(b)~$s=\sqrt{0.5}$\hspace{1.3cm}(c)~$s=\sqrt{1.0}$\hspace{1.2cm}(d)~$s=\sqrt{2.0}$
	\caption{We demonstrate $\loss^{uni}(\vtheta_0)$ as the function of optimization steps in log-linear scale, when we pre-train $\fcn$ on MNIST by minimizing \eqref{eq:final_loss} with respect to $\vtheta_0$ while fixing $\xi$ to 1. We plot mean (curve) $\pm$ stdev (shaded area) across four random runs. For $\lambda=1.0$ (green), $\loss^{uni}(\vtheta_0)$ does not sufficiently decrease to 0 for all $s$. Moreover, for $(s,\lambda)\in\{(\sqrt{1.0}, 1.0), (\sqrt{2.0}, 1.0), (\sqrt{2.0}, 1.0)\}$, $\loss^{uni}(\vtheta_0)$ does not monotonically decrease. To make our model predictions uniform over $\Delta^{d-1}$, we need to avoid large $s$ and $\lambda$.}\label{a_fig:uniform_loss}
\end{figure}

Figure \ref{a_fig:uniform_loss} shows that $\loss^{uni}$ does not converge to 0 when $\lambda=1.0$. It means that our model cannot predict uniformly over $\Delta^{d-1}$ when we use $\lambda=1.0$. However, $\loss^{uni}$ is sufficiently closed to 0 for most setups except $\lambda=1.0$. We thus additionally evaluate the quality of $(s,\lambda)$ whether each setup suffers from either the degenerate softmax or the input-output detachment.

For the degenerate softmax, we compute the average ratio (\%) of perturbed models  (\textit{DS}) with the degenerate softmax across 128 mini-batches of size 32, out of 256 perturbed models. In the case of the input-output detachment, we first calculate the average ratio (\%) of fully dead neurons across 128 mini-batches of size 32 at each hidden layer. We then take their maximum ratio (\textit{IOD}). We report the average and standard deviation of DS and IOD across four random experiments.   

\begin{table}[h]
\caption{We present the average ($\pm$stdev) of DS (\%) and IOD (\%) across four random experiments of Figure \ref{a_fig:uniform_loss}. The lower DS means that our model does not have the issue of the degenerate softmax. Similarly, the lower IOD implies that there is less input-output detachment in our model. When we pre-train our model with $s=\sqrt{0.5}$ and $\lambda=0.4$, both DS and IOD are significantly smaller than the other setups. It means that pre-training with $s=\sqrt{0.5}$ and $\lambda=0.4$ prevents both the degenerate softmax and the input-output detachment more effectively than the others.} 
\label{a_tab:s_lambda_choice}
\begin{center}
    \begin{tabular}{cc|ccc}
    \hline
    \multicolumn{1}{c}{$\boldsymbol{s}$}&\multicolumn{1}{c|}{$\boldsymbol{\lambda}$}&\multicolumn{1}{c}{\bf DS (\%)}&\multicolumn{1}{c}{\bf IOD (\%)}
    \\
    \hline 
    \hline
    $\sqrt{0.2}$&0.2&1.32$\pm$0.73&28.87$\pm$5.05\\
    $\sqrt{0.2}$&0.4&0.54$\pm$0.42&30.25$\pm$3.56\\
    $\sqrt{0.2}$&1.0&0.84$\pm$0.53&16.52$\pm$3.27\\
    \hline
    $\sqrt{0.5}$&0.2&1.33$\pm$0.64&17.80$\pm$3.83\\
    $\sqrt{0.5}$&0.4&1.58$\pm$0.72&5.48$\pm$1.56\\
    $\sqrt{0.5}$&1.0&1.41$\pm$0.72&34.15$\pm$28.81\\
    \hline
    $\sqrt{1.0}$&0.2&6.39$\pm$1.19&6.91$\pm$1.19\\
    $\sqrt{1.0}$&0.4&4.30$\pm$1.23&13.06$\pm$3.23\\
    $\sqrt{1.0}$&1.0&3.96$\pm$1.26&16.13$\pm$3.57\\
    \hline
    $\sqrt{2.0}$&0.2&44.20$\pm$4.75&56.44$\pm$5.00\\
    $\sqrt{2.0}$&0.4&35.57$\pm$5.63&47.62$\pm$5.48\\
    $\sqrt{2.0}$&1.0&16.93$\pm$7.63&34.12$\pm$9.02\\
    \hline
\end{tabular}
\end{center}
\end{table}

As shown in Table \ref{a_tab:s_lambda_choice}, although $\loss^{uni}$ is closed to 0 for most $(s, \lambda)$, the degree of the degenerate softmax and the input-output detachment highly depends on both $s$ and $\lambda$. For most $s$ ($s\in\{\sqrt{0.2},\sqrt{0.5},\sqrt{1.0}\}$, our pre-training does not cause the degenerate softmax, but we observe that there is the issue of input-output detachment except $(s,\lambda)\in\{(\sqrt{0.5},0.4),(\sqrt{1.0},0.2)\}$. Since using $s=\sqrt{0.5}$ and $\lambda=0.4$ has lower DS than using $s=\sqrt{1.0}$ and $\lambda=0.2$, we finally choose $s=\sqrt{0.5}$ and $\lambda=0.4$ for all experiments in this paper.

\subsection{Coefficient for preventing degenerate softmax} \label{a_sec:lambda_underclass}
In \S\ref{a_sec:perturbation_std}, we empirically validate that using $s=\sqrt{0.5}$ and $\lambda=0.4$ for \eqref{eq:final_loss} effectively alleviates the issue of both degenerate softmax and input-output detachment with sufficiently small $\loss^{uni}$.  

\section{Experimental Details in \S\ref{sec:main_exp}}
\label{a_sec:main_exp_detail}

\subsection{Model architectures}
\label{a_sec:model_archi}
\paragraph{\fcn} $\fcn$ is a multi-layer perceptron with fully-connected layers. It has 3 hidden layers, and each hidden layer has 392 units activated by ReLU \citep{nair2010rectified}.
\paragraph{\fcn+\bn} We refer as $\fcn+\bn$ to adding batch normalization \citep{ioffe2015batch} for each hidden layer of $\fcn$ before ReLU.

\subsection{Pre-training details}
We note that both our pre-training and random label pre-training \citep{pondenkandath2018leveraging} do not require any label (i.e., unsupervised learning). Both use 60,000 unlabelled examples.
\label{a_sec:pretrain}
\paragraph{Our pre-training} We pre-train $\fcn$ by minimizing \eqref{eq:final_loss} with respect to $\vtheta_0$ with hyperparameters described in \S\ref{sec:final_loss}. We use Adam with a fixed learning rate of $2\times 10^{-4}$, $\beta_1=0.9$, and $\beta_2=0.999$ without any regularization. We run our experiments with a batch size of 32 for 5 epochs. To compute $\loss^{uni}$ and $\loss^{sd}$, we additionally need random samples from both $\normal(\vzero,\mSigma)$ and $\uniform(\Delta^{d-1})$. We draw 256 random samples from each distribution. We use the parameter configuration of which average pre-training loss calculated on each mini-batch across every 100 steps is at its minimum.

\paragraph{Random label pre-training} We pre-train $\fcn$ and $\fcn+\bn$ by random labeling. We assign a label drawn from $\bernoulli(0.5)$ to every instance in each mini-batch. We minimize the cross entropy with these randomly labelled mini-batches of size 32 for 5 epochs. We use Adam with a fixed learning rate of $2\times 10^{-4}$, $\beta_1=0.9$, and $\beta_2=0.999$ without any regularization. We use the parameter configuration of which average pre-training loss computed on each mini-batch across every 100 steps is at its minimum.

\section{Limitations}
In this paper, we did not consider the computational efficiency of our algorithm. There are two factors to make our computational cost expensive. First of all, for $\loss^{uni}$ and $\loss^{sd}$, we need to use a number of perturbed models (256 in our experiments). It means that the computational cost is proportional to the number of perturbed models. Moreover, for $\loss^{iod}$, we used the Jacobian of the model output with respect to either the input vector or the vector of pre-activated neurons at the intermediate layer. Its computational cost is proportional to the dimensionality of the model output.

Although our analysis does not depend on model architectures, the choice of hyperparameters might rely on them. Since we only recommended how to select the hyperparameters of our algorithm for a fully connected network, our proposed algorithm may require additional considerations to apply other networks.
\end{document}